\documentclass{article}




\usepackage[final,nonatbib]{neurips_2022}


\usepackage[utf8]{inputenc} 
\usepackage[T1]{fontenc}    
\usepackage{hyperref}       
\usepackage{url}            
\usepackage{booktabs}       
\usepackage{amsfonts}       
\usepackage{nicefrac}       
\usepackage{microtype}      
\usepackage{xcolor}         
\usepackage{amsmath, amsthm}
\usepackage{mathtools}
\usepackage[capitalize, noabbrev]{cleveref}
\usepackage{enumerate}
\usepackage{pifont}
\usepackage{multirow}
\usepackage{relsize}
\usepackage{accents}
\usepackage{textcomp, gensymb}
\usepackage{caption}
\usepackage{graphicx}
\usepackage{calc}
\usepackage{wrapfig}
\usepackage{titlesec}
\usepackage[linesnumbered, commentsnumbered, ruled, vlined]{algorithm2e}
\usepackage{subcaption}

\newlength{\depthofsumsign}
\setlength{\depthofsumsign}{\depthof{$\sum$}}

\Crefname{algocf}{Algorithm}{Algorithms}

\newcommand{\vectorname}[1]{{\mathrm{\mathbf{#1}}}}

\newcommand{\ie}{\textit{i.e.}}
\newtheorem{definition}{Definition}
\newtheorem{proposition}{Proposition}
\newtheorem{lemma}{Lemma}
\newtheorem{axiom}{Axiom}
\newtheorem{identity}{Identity}

\newcommand{\myparagraph}[1]{\vspace{0pt}\noindent{\bf #1}}
\newcommand\thickbar[1]{\accentset{\rule{.4em}{.8pt}}{#1}}

\titlespacing*{\section}{0pt}{0.5ex}{0.5ex}
\titlespacing*{\subsection}{0pt}{0.5ex}{0.5ex}
\setlength{\textfloatsep}{1.6pt}

\title{Relational Proxies: Emergent Relationships as Fine-Grained Discriminators}

%

\newcommand*{\affaddr}[1]{#1} 
\newcommand*{\affmark}[1][*]{\textsuperscript{#1}}

\author{%
    Abhra Chaudhuri\affmark[1]
    \hspace{18pt}
   Massimiliano Mancini\affmark[2]
   \hspace{18pt}
   Zeynep Akata\affmark[2,3,4]
   \hspace{18pt}
   Anjan Dutta\affmark[5]\thanks{A. Chaudhuri is with the Department of Computer Science at the University of Exeter. M. Mancini and Z. Akata are with the Cluster of Excellence Machine Learning at the University of Tübingen. A. Dutta is with the Institute for People-Centred AI at the University of Surrey.}\\
\affaddr{\affmark[1] University of Exeter}
\hspace{20pt}
\affaddr{\affmark[2] University of T\"{u}bingen}
\hspace{20pt}
\affaddr{\affmark[3] MPI for Informatics}\\
\affaddr{\affmark[4] MPI for Intelligent Systems}
\hspace{20pt}
\affaddr{\affmark[5] University of Surrey}
}

\begin{document}

\maketitle

\begin{abstract}

    Fine-grained categories that largely share the same set of parts cannot be discriminated based on part information alone, as they mostly differ in the way the local parts relate to the overall global structure of the object. We propose \emph{Relational Proxies}, a novel approach that leverages the relational information between the global and local views of an object for encoding its semantic label. Starting with a rigorous formalization of the notion of distinguishability between fine-grained categories, we prove the necessary and sufficient conditions that a model must satisfy in order to learn the underlying decision boundaries in the fine-grained setting.
    We design Relational Proxies based on our theoretical findings and evaluate it on seven challenging fine-grained benchmark datasets and achieve state-of-the-art results on all of them, surpassing the performance of all existing works with a margin exceeding 4\% in some cases. 
    We also experimentally validate our theory on fine-grained distinguishability and obtain consistent results across multiple benchmarks.
    Implementation is available at \url{https://github.com/abhrac/relational-proxies}.
\end{abstract}

\section{Introduction}
\label{sec:intro}
Fine-grained visual categorization (FGVC) primarily requires identifying category-specific, discriminative local attributes \cite{zhang2016SPDACNN, wei2018MaskCNN, lam2017HSNet}. However, the \emph{relationship} of the attributes with the global view of the object is also known to encode semantic information \cite{Choudhury2021UnsupervisedParts, Caron2021DiNo}. Such a relationship can be thought of as the way in which local attributes combine to form the overall object.
When two categories share a large number of local attributes, this cross-view relational information becomes the only discriminator.
To illustrate this in an intuitive example, \cref{fig:model_diagram} shows two fine-grained categories of birds, the White-faced Plover (left and top-right) and the Kentish Plover (bottom-right).
Along with color and texture information, the two categories share a large number of local features like beak, head, body, tail and wings.
%
%
Given such constraints of largely overlapping attribute sets,
relational information like the distance between the head and the body, or the angular orientation of the legs with respect to the body remain as the only available discriminators.
%
%
We thus conjecture that the way the global structure (view) of the object arises out of its local parts (views) must be an \emph{emergent} \cite{oconnor2021emergence} property of the object which is implicitly encoded as the cross-view relationship. However, all existing methods that consider both global and local information, do so in a \emph{relation-agnostic} manner, \ie, without considering cross-view relationships (we formalize relation-agnosticity in \cref{sec:method}).


We hypothesize that when two categories largely share the same set of local attributes and differ only in the way the attributes combine to generate the global view of the object, relation-agnostic approaches do not capture the full semantic information in an input image. To prove our hypothesis, we develop a rigorous formalization of the notion of distinguishability in the fine-grained setting. Via our theoretical framework, we identify the necessary and sufficient conditions that a learner must satisfy to completely learn a distribution of fine-grained categories. Specifically, we prove that a learner must harness 
both view-specific (relation-agnostic) and cross-view (relation-aware) information in an input image. We also prove that it is not possible to design a single encoder that can achieve both of these objectives simultaneously.
Based on our theoretical findings, we design a learner that separately computes metric space embeddings for the relation-agnostic and relation-aware components in an input image, through class representative vectors that we call Relational Proxies.

To summarize, we:
(1) provide a theoretically rigorous formulation of the FGVC task and formally prove the necessary and sufficient conditions a learner must satisfy for FGVC, (2) introduce a plug-and-play extension on top of conventional CNNs that helps leverage relationships between global and local views of an object in the representation space for obtaining a complete encoding of the fine-grained semantic information in an input image, (3) achieve state-of-the-art results on all benchmark FGVC datasets with significant accuracy gains.

\section{Related Work}
\label{sec:related}

\myparagraph{Fine-grained visual categorization}
Prior works have demonstrated the importance of learning localized image features for FGVC \cite{Angelova2013DetectSegment, Zhang2014RcnnFgvc, Lin2015DeepLAC}, with extensions exploiting the relationship between multiple images and between network layers \cite{Luo2019CrossX}.
The high intra-class and low inter-class variations in FGVC datasets can be tackled by designing appropriate inductive biases like normalized object poses \cite{Branson2014PoseNorm} or via more data-driven methods like deep metric learning \cite{Cui2016FgvcDml}.
Analysing part-specific features along with the global context was demonstrated through part detection based on activation regions in CNN feature maps \cite{Huang2016PartStackedCNN, Zhang2021MMAL} or via context-aware attention pooling \cite{Behera2021CAP}.
CNNs can also be modified in novel ways for FGVC by incorporating boosting \cite{Moghimi2016BoostedCNN}, kernel pooling \cite{Cui2017}, or by randomly masking out a group of correlated channels during training \cite{Ding2021CDB}. Vision Transformers \cite{Vaswani2017}, with their ability to attend to specific informative image patches, have also shown great promise in FGVC \cite{Wang2021FFVT, He2022TransFG, Lu2021SwinFGHash}.
To the best of our knowledge, we are the first to provide a rigorous theoretical foundation for FGVC and design a cross-view relational metric learning formulation based on the same.

\myparagraph{Relation modelling in deep learning} Modelling relationships between entities has proven to be a useful approach in many areas of deep learning including deep reinforcement learning \cite{Zambaldi2019RelationRL}, object detection \cite{Hu2018RelationOD}, question answering \cite{Santoro2019RelationQA}, graph representation learning \cite{Battaglia2018RelationGraphs}, few-shot learning \cite{Sung2018RelationFSL} and knowledge distillation \cite{Park2019RKD}. The usefulness of modelling relationships between different views of the same image has been demonstrated in the self-supervised context by \cite{Patacchiola2022RelationalReasoningSSL}.
All the above works either leverage or aim to learn relationships between entities, the nature of which is assumed to be known \emph{apriori}. Our work breaks free from such assumptions by modelling cross-view relationships as learnable representations that optimize the end-task of FGVC.

\myparagraph{Proxy-based deep metric learning} Motivated by the fact that pairwise losses for deep metric learning incur a significant computational overhead leading to slow convergence, 
the idea of using proxies for learning metric spaces was first proposed in \cite{Movshovitz-Attias2017ProxyNCA} and enhanced in \cite{Teh2020ProxyNCApp}.
Proxies can also be used to emulate properties of pairwise losses by capturing data-to-data relations (instead of just data-to-proxy) leveraging relative hardness of datapoints \cite{Kim2020CVPR}, by making data representations follow the semantic hierarchy inherent in real-world classes \cite{Yang2022HierarchicalProxy}, or by regularizing sample distributions around proxies to follow a non-isotropic distribution \cite{Roth2022NIR}.
However, all the above works perform proxy-based metric learning directly on data representations. In contrast, our approach is designed to learn class proxies that can be used not only to capture isolated, view specific (local/global) information for the underlying class, but also to learn the cross-view \emph{relationships} such that they form embeddings in a metric space.

\section{Relational Proxies}
\label{sec:method}




\begin{figure}
    \centering
    \includegraphics[width=\textwidth]{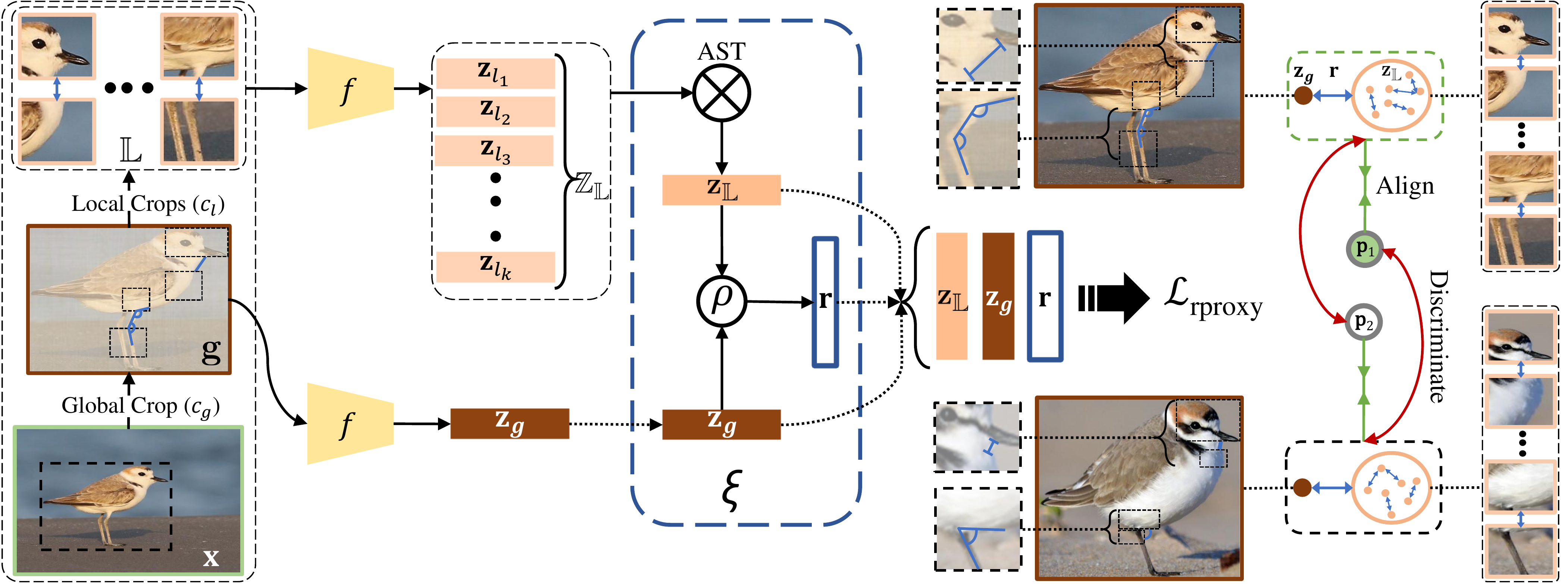}
    \caption{
    We start by encoding the global and local views using a relation-agnostic encoder $f$. We then compute the cross-view relational embedding $\vectorname{r}$ between the global $\vectorname{z}_g$ and the summary of local $\vectorname{z}_\mathbb{L}$ representations. The AST, in conjunction with $\rho$, form the cross-view relational function $\xi$. Finally, the learning of our Relational Proxies is conditioned by both view-specific ($\vectorname{z_\mathbb{L}}$ and $\vectorname{z}_g$) and cross-view relational ($\vectorname{r}$) information.
    Minimizing $\mathcal{L}_\text{rproxy}$ helps to align representations from the same category, while discriminating across different categories in a metric space.
    }
    \label{fig:model_diagram}
\end{figure}

Consider an image $\vectorname{x} \in \mathbb{X}$ with a label $\vectorname{y} \in \mathbb{Y}$.
Let $\vectorname{g} = c_g(\vectorname{x})$ and $\mathbb{L} = \{\vectorname{l}_1, \vectorname{l}_2,... \: \vectorname{l}_k\} = c_l(\vectorname{x})$ be the global and set of local views of an image $\vectorname{x}$ respectively, where $c_g$ and $c_l$ are cropping functions applied on $\vectorname{x}$ to obtain such views.
Let $f$ be an encoder that takes as input $\vectorname{v} \in \{\vectorname{g}\} \cup \mathbb{L}$ and maps it to a latent space representation $\vectorname{z} \in \mathbb{R}^d$, where $d$ is the representation dimensionality. Specifically, the representations of the global view $\vectorname{g}$ and local views $\mathbb{L}$ obtained from $f$ are then denoted by $\vectorname{z}_g = f(\vectorname{g})$ and $\mathbb{Z_L} = \{f(\vectorname{l}) : \vectorname{l} \in \mathbb{L}\} = \{\vectorname{z}_{l_1}, \vectorname{z}_{l_2},... \: \vectorname{z}_{l_k}\}$ respectively.
Let $R: (\vectorname{g}, \mathbb{L}) \rightarrow \vectorname{r}$ be a random variable that encodes the relationships $\vectorname{r}$ between the global ($\vectorname{g}$) and the set of local ($\mathbb{L}$) views.

\subsection{Problem Definition}
We leverage the qualitative consistency in the definition of the fine-grained visual categorization (FGVC) problem in the relevant literature \cite{Luo2019CrossX, Zhang2021MMAL, He2022TransFG, Behera2021CAP} to formalize the same in more quantitative terms as follows.

\begin{definition}[\textbf{k-distinguishability}]
    Two categories $\mathcal{C}_1$ and $\mathcal{C}_2$ are said to be $k$-distinguishable iff along with the global view, a classifier needs at least $k$ local features to tell them apart, \ie, the true hypothesis can only distinguish between $\mathcal{C}_1$ and $\mathcal{C}_2$ if it has access to the complete set $\{\vectorname{z}_g\} \cup \mathbb{Z_L}$, and it fails to distinguish between $\mathcal{C}_1$ and $\mathcal{C}_2$, if it only has access to  $\{\vectorname{z}_g\} \cup \mathbb{Z_L} \backslash \vectorname{z}_l, \forall \vectorname{z}_l \in \mathbb{Z_L}$.
    \label{def:k-dis}
\end{definition}

The notion of \emph{k-distinguishability} formalizes what it means for two categories to only be distinguishable in the fine-grained but not in the coarse-grained setting. Given the concept of k-distinguishability, the definition of FGVC problem directly follows from here: 

\begin{definition}[\textbf{Fine-Grained Visual Categorization Problem} - $\mathcal{P}_\text{FGVC}$]
    A categorization problem is said to belong to the $\mathcal{P}_\text{FGVC} \,$ family, iff there exists at least one pair of categories $\mathcal{C}_1$ and $\mathcal{C}_2$ such that they are k-distinguishable.
    \label{def:fgvc}
\end{definition}

Unless otherwise stated, all datapoints $\left( \vectorname{x}, \vectorname{y} \right)$ are considered to be sampled from $k$-distinguishable categories of an instance of $\mathcal{P}_\text{FGVC}$. In the subsequent sections, we prove that for a learner to completely model the class distribution for an instance of $\mathcal{P}_\text{FGVC}$, it must, alongside the view specific representations $\vectorname{z}_g$ and $\mathbb{Z_L}$, also learn a function $\xi$ that models the cross-view relationship between the global and the local views.
Thus, a function $\xi$, to model $R$, must satisfy the following properties: (1) \emph{View-Unification}: Maps the set of all views $\{\vectorname{g}, \vectorname{l} \in \mathbb{L}\}$ of an image $\vectorname{x}$ to a single output $\vectorname{r}$; (2) \emph{Permutation Invariance}: Produces the same output irrespective of the order of the local attributes, \ie, $\xi(\vectorname{z}_g, \{\vectorname{z}_{l_1}, \vectorname{z}_{l_2},... \: \vectorname{z}_{l_k}\}) = \xi(\vectorname{z}_g, \{\vectorname{z}_{l_{\pi(1)}}, \vectorname{z}_{l_{\pi(2)}},... \: \vectorname{z}_{l_{\pi(k)}}\})$, for every permutation $\pi$, where $\vectorname{z}_g$ and $\vectorname{z}_{l_i}$ are the representations of the global and the local views respectively, obtained from $f$. We provide more details on the necessity of these properties in \cref{subsec:props_of_xi} of the Appendix.

\subsection{Relation-Agnostic Representations and Information Gap}
In this section, we formally study the nature of the representation spaces learned by models that do not consider the cross-view
relational information in the context of $\mathcal{P}_\text{FGVC}$. We term such representations as being "relation-agnostic" and prove via \cref{prop:Ixy_Izy_gap} that they suffer from an Information Gap, and thus are unable to capture the complete label information encoded in an input image.

\begin{definition}[\textbf{Relation-Agnostic Representations - Information Theoretic}]
\label{def:RelAgnRepr}
An encoder is said to produce relation-agnostic representations if it independently encodes the global view $\vectorname{g}$ and local views $\vectorname{l} \in \mathbb{L}$ of $\vectorname{x}$ without considering their relationship information $\vectorname{r}$.
\end{definition}

\begin{lemma}
Given a relation-agnostic representation $\vectorname{z}$ of $\vectorname{x}$, the conditional mutual information between $\vectorname{x}$ and $\vectorname{y}$ given $\vectorname{z}$ can be reduced to $I(\vectorname{x};\vectorname{r} | \vectorname{z})$.
\label{lemma:RelationalReduction}
\end{lemma}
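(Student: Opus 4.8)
The plan is to establish the equality $I(\vectorname{x};\vectorname{y}\mid\vectorname{z}) = I(\vectorname{x};\vectorname{r}\mid\vectorname{z})$ by treating the pair $(\vectorname{z},\vectorname{r})$ as a \emph{complete} semantic descriptor of $\vectorname{x}$ and then separating the label and relational contributions with the chain rule of conditional mutual information. First I would record the two structural facts that the setup hands us: (i) the label $\vectorname{y}$ is a deterministic function of the image $\vectorname{x}$, so $H(\vectorname{y}\mid\vectorname{x},\vectorname{z})=0$; and (ii) the relation $\vectorname{r}=R(\vectorname{g},\mathbb{L})$ is itself a function of $\vectorname{x}$ through the cropping maps $c_g,c_l$, so $H(\vectorname{r}\mid\vectorname{x},\vectorname{z})=0$. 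These let me treat both $\vectorname{y}$ and $\vectorname{r}$ as residual quantities that are fully fixed once $\vectorname{x}$ is known, which is what makes the two expansions below collapse.

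The core step is to expand the joint conditional information $I(\vectorname{x};\vectorname{y},\vectorname{r}\mid\vectorname{z})$ in the two possible orders and subtract:
\[
I(\vectorname{x};\vectorname{y},\vectorname{r}\mid\vectorname{z}) = I(\vectorname{x};\vectorname{y}\mid\vectorname{z}) + I(\vectorname{x};\vectorname{r}\mid\vectorname{y},\vectorname{z}) = I(\vectorname{x};\vectorname{r}\mid\vectorname{z}) + I(\vectorname{x};\vectorname{y}\mid\vectorname{r},\vectorname{z}).
\]
The desired identity then reduces to showing that the two cross terms coincide, and in fact that each vanishes. The term $I(\vectorname{x};\vectorname{y}\mid\vectorname{r},\vectorname{z})=0$ is exactly the sufficiency (Markov) condition $\vectorname{x}\to(\vectorname{z},\vectorname{r})\to\vectorname{y}$: by the problem setup and \cref{def:RelAgnRepr}, the full label information of an image decomposes into its view-specific content, which the relation-agnostic $\vectorname{z}$ already encodes, and its cross-view relational content $\vectorname{r}$, so conditioning on both leaves no further label-relevant information in $\vectorname{x}$.

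The remaining term $I(\vectorname{x};\vectorname{r}\mid\vectorname{y},\vectorname{z})=0$ is where the main difficulty lies, and I would argue it from the $k$-distinguishability structure of \cref{def:k-dis}: since the categories differ precisely in their cross-view relational configuration, the label-relevant relation $\vectorname{r}$ is class-characteristic once the views are fixed, so conditioning on $\vectorname{y}$ and $\vectorname{z}$ removes the residual dependence of $\vectorname{r}$ on $\vectorname{x}$. Making this last claim airtight is the crux — it hinges on whether $\vectorname{r}$ denotes the raw relational random variable or only its label-relevant projection. A cleaner fallback route is to first prove the inequality $I(\vectorname{x};\vectorname{y}\mid\vectorname{z}) \le I(\vectorname{x};\vectorname{r}\mid\vectorname{z})$ directly, using facts (i)--(ii) to rewrite the two sides as $H(\vectorname{y}\mid\vectorname{z})$ and $H(\vectorname{r}\mid\vectorname{z})$ and invoking that $\vectorname{y}$ is a function of $(\vectorname{z},\vectorname{r})$, and then to close the gap via the injectivity of the relation-to-label map on $k$-distinguishable classes. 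I would finish by substituting the two vanishing cross terms back into the expansion to obtain $I(\vectorname{x};\vectorname{y}\mid\vectorname{z}) = I(\vectorname{x};\vectorname{r}\mid\vectorname{z})$.
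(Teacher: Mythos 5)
Your formal skeleton --- expanding $I(\vectorname{x};\vectorname{y},\vectorname{r}|\vectorname{z})$ in both orders and comparing cross terms --- is a valid identity, and your first vanishing claim, $I(\vectorname{x};\vectorname{y}|\vectorname{r},\vectorname{z})=0$, is a faithful restatement of the paper's modelling premise that label information decomposes into view-specific content (carried by $\vectorname{z}$) and cross-view relational content (carried by $\vectorname{r}$). The genuine gap is exactly where you flagged it: the second claim, $I(\vectorname{x};\vectorname{r}|\vectorname{y},\vectorname{z})=0$, does not follow from \cref{def:k-dis}, and your proposed justification conflates two different things. $k$-distinguishability is a statement about \emph{inter-class} discrimination --- that no classifier can separate two categories without the full view set --- and says nothing about \emph{intra-class} constancy of the relational variable. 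Since $\vectorname{r}$ is the raw relational random variable computed from $\vectorname{x}$ (not a label-relevant projection of it), two images of the same category can have different relational configurations (pose, articulation, viewpoint), and since $\vectorname{z}$ is relation-agnostic ($I(\vectorname{r};\vectorname{z})=0$ by \cref{def:RelAgnRepr}), conditioning on $(\vectorname{y},\vectorname{z})$ cannot recover that instance-specific variation. So in general $I(\vectorname{x};\vectorname{r}|\vectorname{y},\vectorname{z})>0$, your expansions only yield the one-sided bound $I(\vectorname{x};\vectorname{y}|\vectorname{z}) \le I(\vectorname{x};\vectorname{r}|\vectorname{z})$ with deficit exactly $I(\vectorname{x};\vectorname{r}|\vectorname{y},\vectorname{z})$, and the ``injectivity of the relation-to-label map'' in your fallback is the same unproven claim in different clothing.

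It is instructive to see where the paper puts the burden instead. Its proof first invokes \cref{identity:relational_uncertainty_z}, $I(\vectorname{x};\vectorname{y}|\vectorname{z}) = I(\vectorname{x};\vectorname{r})$, i.e., the residual uncertainty about the label given a relation-agnostic representation \emph{is} the relational information; this is postulated from an additive decomposition of $I(\vectorname{x};\vectorname{y})$ into view-specific plus relational terms, with $I(\vectorname{x};\vectorname{z})$ identified with the view-specific sum. From there the argument is mechanical: $I(\vectorname{x};\vectorname{r}) = I(\vectorname{x};\vectorname{r}|\vectorname{z}) + I(\vectorname{r};\vectorname{z})$ by the chain rule, and $I(\vectorname{r};\vectorname{z})=0$ by relation-agnosticity. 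In other words, the paper's route never needs your within-class determinism condition; whatever label-irrelevant content $\vectorname{r}$ might carry is silently absorbed into the postulated identity. To repair your proof you must either adopt $I(\vectorname{x};\vectorname{r}|\vectorname{y},\vectorname{z})=0$ explicitly as an additional modelling assumption (making your hypotheses strictly stronger than the paper's), or replace that step with \cref{identity:relational_uncertainty_z}, at which point the symmetric double expansion becomes an unnecessary detour.
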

\begin{proof}

Given a relation-agnostic representation $\vectorname{z}$ of $\vectorname{x}$, the only uncertainty that remains about the label information $\vectorname{y}$ can be quantified as the cross-view relational information $\vectorname{r}$, \ie, $I(\vectorname{x};\vectorname{y} | \vectorname{z}) = I(\vectorname{x};\vectorname{r})$. The proof of this statement is given in \cref{identity:relational_uncertainty_z} of the Appendix.

Intuitively, the conditional mutual information between $\vectorname{x}$ and $\vectorname{y}$ given $\vectorname{z}$, \ie, $I(\vectorname{x};\vectorname{y} | \vectorname{z})$ represents the information for predicting $\vectorname{y}$ from $\vectorname{x}$ that $\vectorname{z}$ is unable to capture. Since $\vectorname{z}$ is relation-agnostic, the only uncertainty that remains in $\vectorname{x}$ after $\vectorname{z}$ is the cross-set relationship between the global and the local views, \ie, $\vectorname{r}$. Therefore, we can write $I(\vectorname{x};\vectorname{y} | \vectorname{z}) = I(\vectorname{x};\vectorname{r})$. Using this equality and further factorizing $I(\vectorname{x};\vectorname{r})$ using the chain rule for mutual information, we get:
    \begin{equation*}
        I(\vectorname{x};\vectorname{y} | \vectorname{z}) = I(\vectorname{x};\vectorname{r}) = I(\vectorname{x};\vectorname{r} | \vectorname{z}) + I(\vectorname{r};\vectorname{z}) = I(\vectorname{x};\vectorname{r} | \vectorname{z}),
    \end{equation*}
    the latter equality following from \cref{def:RelAgnRepr}, which implies that $I(\vectorname{r};\vectorname{z}) = 0$, since $\vectorname{z}$ does not explicitly model the local-to-global relationships $\vectorname{r}$.
\end{proof}

\begin{lemma}
    The mutual information between $\vectorname{x}$ and its relation-agnostic representation $\vectorname{z}$ does not change with the knowledge of $\vectorname{r}$.
    \label{lemma:RelationalExclusion}
\end{lemma}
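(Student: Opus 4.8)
The statement to establish is that $I(\vectorname{x};\vectorname{z}) = I(\vectorname{x};\vectorname{z}\mid\vectorname{r})$, i.e., conditioning on the cross-view relational variable $\vectorname{r}$ leaves the information shared between the input $\vectorname{x}$ and its relation-agnostic embedding $\vectorname{z}$ unchanged. My plan is to expand the joint mutual information $I(\vectorname{x};\vectorname{z},\vectorname{r})$ via the chain rule along the two possible orderings of $\vectorname{z}$ and $\vectorname{r}$, equate the two expansions, and cancel a common term using the identity already extracted in the proof of \cref{lemma:RelationalReduction}.

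Concretely, I would first write the two chain-rule factorizations
\begin{equation*}
    I(\vectorname{x};\vectorname{z},\vectorname{r}) = I(\vectorname{x};\vectorname{z}) + I(\vectorname{x};\vectorname{r}\mid\vectorname{z}) = I(\vectorname{x};\vectorname{r}) + I(\vectorname{x};\vectorname{z}\mid\vectorname{r}).
\end{equation*}
The next step is to recall from the proof of \cref{lemma:RelationalReduction} that relation-agnosticity (\cref{def:RelAgnRepr}) forces $I(\vectorname{r};\vectorname{z})=0$, and therefore $I(\vectorname{x};\vectorname{r}) = I(\vectorname{x};\vectorname{r}\mid\vectorname{z})$. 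Substituting this equality into the second equality above makes the terms $I(\vectorname{x};\vectorname{r})$ and $I(\vectorname{x};\vectorname{r}\mid\vectorname{z})$ coincide, so cancelling them from both sides yields $I(\vectorname{x};\vectorname{z}) = I(\vectorname{x};\vectorname{z}\mid\vectorname{r})$ immediately.

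The main obstacle is justifying the single nontrivial equality $I(\vectorname{x};\vectorname{r}) = I(\vectorname{x};\vectorname{r}\mid\vectorname{z})$ in a self-contained way rather than merely inheriting it verbatim. Equivalently, the entire claim reduces to showing that the three-way interaction information $I(\vectorname{x};\vectorname{z};\vectorname{r}) = I(\vectorname{x};\vectorname{z}) - I(\vectorname{x};\vectorname{z}\mid\vectorname{r})$ vanishes. I would verify this by using the symmetric expression $I(\vectorname{x};\vectorname{z};\vectorname{r}) = I(\vectorname{z};\vectorname{r}) - I(\vectorname{z};\vectorname{r}\mid\vectorname{x})$: the first term is zero by \cref{def:RelAgnRepr}, and the second is zero because $\vectorname{z}$ is a deterministic function of the views of $\vectorname{x}$, so conditioning on $\vectorname{x}$ eliminates all residual uncertainty in $\vectorname{z}$ and hence any information it could share with $\vectorname{r}$. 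This confirms that the interaction term vanishes and closes the argument without circularity, giving the desired invariance of $I(\vectorname{x};\vectorname{z})$ under knowledge of $\vectorname{r}$.
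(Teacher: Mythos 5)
Your proof is correct, and it is in fact tighter than the paper's own argument. The paper disposes of this lemma in one line by invoking a ``chain rule'' of the form $I(\vectorname{x};\vectorname{z}) = I(\vectorname{x};\vectorname{z}\mid\vectorname{r}) + I(\vectorname{z};\vectorname{r})$ and then setting $I(\vectorname{z};\vectorname{r}) = 0$ by \cref{def:RelAgnRepr}. As a general identity this is not a chain rule: the correct decomposition is $I(\vectorname{x};\vectorname{z}) = I(\vectorname{x};\vectorname{z}\mid\vectorname{r}) + I(\vectorname{x};\vectorname{z};\vectorname{r})$, where the last term is the (possibly negative) interaction information, and it coincides with $I(\vectorname{z};\vectorname{r})$ only when $I(\vectorname{z};\vectorname{r}\mid\vectorname{x}) = 0$. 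You perform the same cancellation, but you derive it from the two legitimate chain-rule expansions of $I(\vectorname{x};\vectorname{z},\vectorname{r})$ and then prove the interaction term vanishes by combining $I(\vectorname{z};\vectorname{r}) = 0$ (relation-agnosticity) with $I(\vectorname{z};\vectorname{r}\mid\vectorname{x}) = 0$, the latter because $\vectorname{z}$ is a deterministic function of the views cropped from $\vectorname{x}$, so $H(\vectorname{z}\mid\vectorname{x}) = 0$. That determinism step is exactly what the paper leaves implicit, both here and in the proof of \cref{lemma:RelationalReduction}; making it explicit is what licenses the substitution and keeps your argument non-circular (your first sketch, which inherits $I(\vectorname{x};\vectorname{r}) = I(\vectorname{x};\vectorname{r}\mid\vectorname{z})$ verbatim from \cref{lemma:RelationalReduction}, would otherwise just import the same unjustified identity). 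The net comparison: the paper's proof is shorter but rests on a decomposition that only holds under an unstated hypothesis, while yours isolates that hypothesis (deterministic encoding) and shows where the lemma genuinely needs it, so a reader can see that for a stochastic encoder the claim would require a separate argument.
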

\begin{proof}
    Following the chain rule \cite{Federici2020MIB}, the mutual information between $\vectorname{x}$ and $\vectorname{z}$, \ie, $I(\vectorname{x};\vectorname{z})$ can be expressed as $I(\vectorname{x};\vectorname{z} | \vectorname{r}) + I(\vectorname{z};\vectorname{r})$. However, since $\vectorname{z}$ is \emph{relation-agnostic} (\cref{def:RelAgnRepr}), $I(\vectorname{z};\vectorname{r}) = 0$. Thus,
    $I(\vectorname{x};\vectorname{z}) = I(\vectorname{x};\vectorname{z} | \vectorname{r}) + I(\vectorname{z};\vectorname{r}) = I(\vectorname{x};\vectorname{z} | \vectorname{r})$.
\end{proof}


\begin{proposition}
    For relation-agnostic representation $\vectorname{z}$ of $\vectorname{x}$, the label information encoded in $\vectorname{z}$ is strictly upper-bounded by the label information in $\vectorname{x}$, \ie, $I(\vectorname{x}; \vectorname{y}) > I(\vectorname{z}; \vectorname{y})$ by an amount $I(\vectorname{x}; \vectorname{r} | \vectorname{z})$.
    \label{prop:Ixy_Izy_gap}
\end{proposition}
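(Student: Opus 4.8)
The plan is to first pin down the exact size of the gap using the chain rule for mutual information, and then to establish its strict positivity from $k$-distinguishability. Since $\vectorname{z}$ is a representation computed from $\vectorname{x}$ (the views are crops of $\vectorname{x}$ encoded by $f$), the triple forms a Markov chain $\vectorname{y} \rightarrow \vectorname{x} \rightarrow \vectorname{z}$, so $\vectorname{z}$ is conditionally independent of $\vectorname{y}$ given $\vectorname{x}$; hence $I(\vectorname{z};\vectorname{y} | \vectorname{x}) = 0$. I would then expand the joint mutual information $I(\vectorname{x}, \vectorname{z}; \vectorname{y})$ in the two orders permitted by the chain rule,
\begin{equation*}
I(\vectorname{x};\vectorname{y}) + I(\vectorname{z};\vectorname{y} | \vectorname{x}) = I(\vectorname{z};\vectorname{y}) + I(\vectorname{x};\vectorname{y} | \vectorname{z}),
\end{equation*}
and drop the vanishing term on the left to obtain
\begin{equation*}
I(\vectorname{x};\vectorname{y}) - I(\vectorname{z};\vectorname{y}) = I(\vectorname{x};\vectorname{y} | \vectorname{z}),
\end{equation*}
which identifies the gap exactly with the conditional mutual information $I(\vectorname{x};\vectorname{y} | \vectorname{z})$.

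Next I would invoke \cref{lemma:RelationalReduction}, which already establishes $I(\vectorname{x};\vectorname{y} | \vectorname{z}) = I(\vectorname{x};\vectorname{r} | \vectorname{z})$, so the gap equals $I(\vectorname{x};\vectorname{r} | \vectorname{z})$, matching the amount claimed in the statement. This settles the magnitude of the gap and reduces the problem to showing that this residual term is nonzero.

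Finally, to upgrade the bound to a strict inequality I must argue $I(\vectorname{x};\vectorname{r} | \vectorname{z}) > 0$, and this is where \cref{def:k-dis} does the real work: because the $k$-distinguishable categories share their view-specific content and are separated \emph{only} through the cross-view relationship $\vectorname{r}$, a relation-agnostic $\vectorname{z}$ cannot recover $\vectorname{r}$ from $\vectorname{x}$, so $\vectorname{r}$ retains strictly positive residual information about $\vectorname{x}$ once $\vectorname{z}$ is given. I expect this last step to be the main obstacle: the chain-rule bookkeeping and the appeal to \cref{lemma:RelationalReduction} are routine, but turning the qualitative statement of $k$-distinguishability into rigorous \emph{strict} positivity of the residual mutual information—rather than mere nonnegativity—requires care, likely an argument that a set of inputs of nonzero measure flips its label under a change in $\vectorname{r}$ that leaves $\vectorname{z}$ fixed, so that conditioning on $\vectorname{z}$ cannot reduce the relational information to zero.
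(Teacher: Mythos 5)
Your proposal is correct and lands on the same key identity as the paper, but your bookkeeping is genuinely different and cleaner. The paper proceeds via two separate decompositions: it first writes $I(\vectorname{x};\vectorname{y}) = I(\vectorname{x};\vectorname{y}|\vectorname{z}) + I(\vectorname{x};\vectorname{z})$ and rewrites both summands using \cref{lemma:RelationalReduction} \emph{and} \cref{lemma:RelationalExclusion} to obtain $I(\vectorname{x};\vectorname{r}|\vectorname{z}) + I(\vectorname{x};\vectorname{z}|\vectorname{r})$; it then undoes the \cref{lemma:RelationalExclusion} substitution and invokes a second decomposition, $I(\vectorname{z};\vectorname{y}) = I(\vectorname{z};\vectorname{y}|\vectorname{x}) + I(\vectorname{z};\vectorname{x})$ with $I(\vectorname{z};\vectorname{y}|\vectorname{x}) = 0$ by data processing, to replace $I(\vectorname{x};\vectorname{z})$ by $I(\vectorname{z};\vectorname{y})$. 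You instead expand the single joint quantity $I(\vectorname{x},\vectorname{z};\vectorname{y})$ in both orders and cancel using the Markov chain $\vectorname{y}\rightarrow\vectorname{x}\rightarrow\vectorname{z}$, which gives $I(\vectorname{x};\vectorname{y}) - I(\vectorname{z};\vectorname{y}) = I(\vectorname{x};\vectorname{y}|\vectorname{z})$ in one stroke; \cref{lemma:RelationalExclusion} is never needed (in the paper it is applied and then immediately inverted, so it is a no-op in the final accounting). Your route also buys rigor: the paper's two ``chain rule'' displays, read literally, are expansions of $I(\vectorname{x};\vectorname{y},\vectorname{z})$ and $I(\vectorname{z};\vectorname{y},\vectorname{x})$, and equal the stated left-hand sides only under additional conditional-independence assumptions, whereas your symmetric expansion is an unconditional identity plus the single, justified Markov hypothesis. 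On strictness the two arguments are on comparable footing: the paper simply asserts the strict inequality, leaving $I(\vectorname{x};\vectorname{r}|\vectorname{z}) > 0$ implicit in the standing assumption that $(\vectorname{x},\vectorname{y})$ is drawn from $k$-distinguishable categories of $\mathcal{P}_\text{FGVC}$ (via \cref{lemma:RelationalReduction} the gap equals $I(\vectorname{x};\vectorname{r})$, which is nontrivial precisely because $\vectorname{r}$ is the residual discriminator of \cref{def:k-dis}); you flag exactly this step as the one requiring genuine work, which is a fair --- if anything more careful --- treatment than the paper's own.
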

\begin{proof}
    The mutual information $I(\vectorname{x};\vectorname{y})$ between a datapoint $\vectorname{x}$ 
    and its ground-truth label $\vectorname{y}$ can be expressed as $I(\vectorname{x};\vectorname{y} | \vectorname{z}) + I(\vectorname{x};\vectorname{z})$ based on the chain rule. Here $I(\vectorname{x};\vectorname{y} | \vectorname{z})$ represents the information for predicting $\vectorname{y}$ from $\vectorname{x}$ that $\vectorname{z}$ is unable to capture, while $I(\vectorname{x};\vectorname{z})$ denotes the predictive information that $\vectorname{z}$ does capture from $\vectorname{x}$. We can thus rewrite $I(\vectorname{x};\vectorname{y})$ using \cref{lemma:RelationalReduction} and \cref{lemma:RelationalExclusion} as:
    \begin{equation}
        I(\vectorname{x};\vectorname{y}) = I(\vectorname{x};\vectorname{y} | \vectorname{z}) + I(\vectorname{x};\vectorname{z}) = I(\vectorname{x};\vectorname{r} | \vectorname{z}) + I(\vectorname{x};\vectorname{z} | \vectorname{r})
        \label{eqn:RZIndependence}
    \end{equation}

    Now, using the chain rule of mutual information, $I(\vectorname{z};\vectorname{y}) = I(\vectorname{z};\vectorname{y} | \vectorname{x}) + I(\vectorname{z};\vectorname{x})$. However, as a consequence of the data processing inequality \cite{Federici2020MIB}, $I(\vectorname{z};\vectorname{y} | \vectorname{x}) = 0$ (since $\vectorname{z}$ cannot encode any more information about $\vectorname{y}$ than $\vectorname{x}$). Applying this and \cref{lemma:RelationalExclusion} to \cref{eqn:RZIndependence}:
    \begin{equation*}
        I(\vectorname{x};\vectorname{y}) = I(\vectorname{x};\vectorname{r} | \vectorname{z}) + I(\vectorname{x};\vectorname{z} | \vectorname{r}) = I(\vectorname{x};\vectorname{r} | \vectorname{z}) + I(\vectorname{x};\vectorname{z}) = I(\vectorname{x};\vectorname{r} | \vectorname{z}) + I(\vectorname{z};\vectorname{y})
    \end{equation*}
    Therefore, $I(\vectorname{x};\vectorname{y}) > I(\vectorname{z}; \vectorname{y})$, by an amount $I(\vectorname{x}; \vectorname{r} | \vectorname{z})$.
\end{proof}

\myparagraph{Intuition:} By establishing a strict upper-bound, \cref{prop:Ixy_Izy_gap} shows that relation-agnostic encoders cannot fully capture the label information in an input image. The quantity they are unable to capture is given by $I(\vectorname{x}; \vectorname{r} | \vectorname{z})$, which we call the Information Gap.

    
    
    

\subsection{Sufficient Learner}
\cref{prop:Ixy_Izy_gap} states that the information gap exists \emph{if} the representation space happens to be relation-agnostic.
We now explore if there is really the need to learn relation-agnostic representations in the first place. From there, we identify the necessary and sufficient conditions for a complete learning of $I(\vectorname{x}; \vectorname{y})$, and derive the requirements for a learner to do the same.

\begin{definition}[\textbf{Relation-Agnostic Representations - Geometric}]
    Let $n_\epsilon(\cdot)$ represent the $\epsilon$-neighbourhood around a point in the limit $\epsilon \to 0$ \footnote{The choice of $\epsilon$ determines the degree of relation-agnosticity of the representation space.}.
    A representation space is relation-agnostic if and only if $\forall \vectorname{z}_l \in \mathbb{Z}_L : n_\epsilon(\vectorname{z}_l) \cap n_\epsilon(\vectorname{z}_g) = \phi$.
    \label{def:geometric-disjointness}
\end{definition}

An intuitive explanation of \cref{def:geometric-disjointness} can be found in \cref{subsec:note_geom_rel_agn} of the Appendix.
\begin{axiom}
    $f$ learns representations $\vectorname{z}$ such that a classifier operating on the domain of
    $\vectorname{z}$ learns a distribution $\hat{\vectorname{y}}$, minimizing its cross-entropy with the true distribution $-\sum_{i} \vectorname{y}_i \log(\hat{\vectorname{y}}_i)$, where $i$ denotes the $i$-th class.
    \label{axiom:minCE}
\end{axiom}

\begin{lemma}
    For an instance of $\mathcal{P}_\text{FGVC} \,$, the representation space learned by $f$ is relation-agnostic, \ie, the global view $\vectorname{g}$ and the set of local views $\vectorname{l} \in \mathbb{L}$ are mapped to disjoint locations in the representation space.
    \label{lemma:fLearnsDisjoint}
\end{lemma}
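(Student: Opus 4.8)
The plan is to argue by contradiction, playing the geometric characterization of relation-agnosticity (\cref{def:geometric-disjointness}) against the minimum cross-entropy requirement of \cref{axiom:minCE}, with $k$-distinguishability (\cref{def:k-dis}) supplying the essential tension. I would assume the negation of the claim: that the representation space learned by $f$ is \emph{not} relation-agnostic, so there exists some local representation $\vectorname{z}_l \in \mathbb{Z_L}$ whose vanishing neighbourhood overlaps that of the global representation, $n_\epsilon(\vectorname{z}_l) \cap n_\epsilon(\vectorname{z}_g) \neq \phi$ as $\epsilon \to 0$. The goal is then to show this overlap is incompatible with $f$ being a cross-entropy minimizer on an instance of $\mathcal{P}_\text{FGVC}$.

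First I would read the overlap informationally: in the limit $\epsilon \to 0$, coincident neighbourhoods force $\vectorname{z}_l$ and $\vectorname{z}_g$ to be indistinguishable to any classifier acting on the representation domain, so the two views become mutually redundant, and whatever $\vectorname{z}_l$ contributes is already recoverable from $\vectorname{z}_g$. Next I would invoke \cref{def:k-dis}: on a $k$-distinguishable pair $(\mathcal{C}_1, \mathcal{C}_2)$, the true hypothesis succeeds only with the complete set $\{\vectorname{z}_g\} \cup \mathbb{Z_L}$ and fails on $\{\vectorname{z}_g\} \cup \mathbb{Z_L} \backslash \vectorname{z}_l$ for every $\vectorname{z}_l$, so each local view carries discriminative information absent from the rest, in particular absent from $\vectorname{z}_g$. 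These two facts collide: redundancy says dropping $\vectorname{z}_l$ costs nothing, while $k$-distinguishability says dropping it forces failure. The role of \cref{axiom:minCE} is to rule out the overlap at the optimum, since an overlap would prevent the downstream classifier from exploiting all $k+1$ necessary signals, strictly raising its cross-entropy above the minimum that the Axiom asserts $f$ attains. Hence the cross-entropy-minimizing $f$ must map $\vectorname{g}$ and every $\vectorname{l} \in \mathbb{L}$ to pairwise disjoint locations, which is precisely relation-agnosticity in the sense of \cref{def:geometric-disjointness}.

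I expect the delicate step to be the informational reading of the geometric overlap. One must argue carefully that coincidence of $\epsilon$-neighbourhoods \emph{in the limit} $\epsilon \to 0$ genuinely renders the two views indistinguishable to the classifier, rather than merely close enough that an arbitrarily sharp decision rule could still separate their contributions; the footnote tying the choice of $\epsilon$ to the ``degree'' of relation-agnosticity suggests this limit must be handled with some care. Bridging this geometric-to-informational gap, so that overlap really does imply the kind of redundancy that contradicts \cref{def:k-dis}, is where the substance of the argument lies, while the remaining steps amount to bookkeeping with the chain of definitions.
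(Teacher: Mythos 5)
Your proposal is correct and follows essentially the same route as the paper's own proof: contradiction via the geometric definition of relation-agnosticity, reading the neighbourhood overlap as loss of the local view's unique information so the classifier effectively has only $\{\vectorname{z}_g\} \cup \mathbb{Z_L} \backslash \vectorname{z}_l$, then invoking $k$-distinguishability to force misclassification and thereby a violation of the minimum cross-entropy axiom. The ``delicate step'' you flag (overlap $\Rightarrow$ redundancy) is in fact asserted without further justification in the paper as well, so your treatment is, if anything, slightly more self-aware about where the argument's substance lies.
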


\begin{proof}
    From \cref{def:geometric-disjointness}, a representation space is not relation-agnostic \emph{iff} $\exists \vectorname{z}_l \in \mathbb{Z}_L : n_\epsilon(\vectorname{z}_l) \cap n_\epsilon(\vectorname{z}_g) \neq \phi$. Under this condition, the classifier only has the information from $\{\vectorname{z}_g\} \cup \mathbb{Z}_L \backslash \vectorname{z}_l$ instead of the required $\{\vectorname{z}_g\} \cup \mathbb{Z}_L$. Thus, for instances of $\mathcal{P}_\text{FGVC}$, according to \cref{def:k-dis}, removing the relation-agnostic nature from the representation space of $f$ would cause a downstream classifier to produce misclassifications across the instances of $k$-distinguishable categories, leading to a violation of \cref{axiom:minCE}. Hence, $f$ can only learn relation-agnostic representations.
\end{proof}
We can thus conclude from \cref{lemma:fLearnsDisjoint} and \cref{prop:Ixy_Izy_gap} that the necessary and sufficient conditions for a learner to capture the complete label information $I(\vectorname{x}; \vectorname{y})$, are to consider both (1) the relation-agnostic information $\vectorname{z}$ and (2) the cross-view relational information $\vectorname{r}$.

\begin{proposition}
    An encoder $f$ trained to learn relation-agnostic representations $\vectorname{z}$ of datapoints $\vectorname{x}$ cannot be used to model the relationship $\vectorname{r}$ between the global and local views of $\vectorname{x}$.
    \label{prop:separate_relational_model}
\end{proposition}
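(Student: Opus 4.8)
The plan is to argue by contradiction, showing that the two demands placed on $f$---being relation-agnostic (which \cref{lemma:fLearnsDisjoint} has just shown is \emph{forced} on any $f$ that minimizes cross-entropy over an instance of $\mathcal{P}_\text{FGVC}$) and simultaneously modelling the cross-view relationship $\vectorname{r}$---are mutually exclusive. First I would recall that, by \cref{lemma:fLearnsDisjoint}, $f$ has no freedom: whatever $\vectorname{z} = f(\vectorname{v})$ it emits must be relation-agnostic in the sense of \cref{def:RelAgnRepr}. Consequently, exactly as in the argument of \cref{lemma:RelationalExclusion}, $I(\vectorname{z};\vectorname{r}) = 0$; that is, the encoder's outputs carry no information about the relational variable $\vectorname{r}$.

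Next I would formalize what it means for $f$ to ``model $\vectorname{r}$''. Any representation the encoder could offer toward this end is some (possibly post-processed) function $h(\vectorname{z})$ of its output. Modelling $\vectorname{r}$ requires capturing nonzero information about it---recovering at least part of the Information Gap $I(\vectorname{x};\vectorname{r}\mid\vectorname{z}) > 0$ established in \cref{prop:Ixy_Izy_gap}. But the data processing inequality gives $I(h(\vectorname{z});\vectorname{r}) \le I(\vectorname{z};\vectorname{r}) = 0$, so no map applied to $f$'s output can carry any information about $\vectorname{r}$. Assuming the contrary---that $f$ does model $\vectorname{r}$---would force $I(\vectorname{z};\vectorname{r}) > 0$, directly contradicting the relation-agnosticity mandated by \cref{lemma:fLearnsDisjoint}. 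This contradiction yields the claim.

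As a complementary geometric reading that I would include for intuition, modelling $\vectorname{r}$ demands the \emph{View-Unification} property: the relational function must fuse the global and local views into a single output, which geometrically requires the neighbourhoods $n_\epsilon(\vectorname{z}_g)$ and $n_\epsilon(\vectorname{z}_l)$ to overlap. This is precisely the negation of the disjointness condition $n_\epsilon(\vectorname{z}_l)\cap n_\epsilon(\vectorname{z}_g)=\phi$ of \cref{def:geometric-disjointness} that \cref{lemma:fLearnsDisjoint} obliges $f$ to maintain. Hence the geometry $f$ is forced into is incompatible with the geometry that modelling $\vectorname{r}$ needs.

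I expect the main obstacle to be pinning down the exact meaning of ``cannot be used to model $\vectorname{r}$'' so that the impossibility is airtight rather than merely suggestive: I must rule out that some clever downstream transformation of $\vectorname{z}$ recovers $\vectorname{r}$. The data processing inequality closes this gap, but only once I have committed to the facts that every candidate output factors through $\vectorname{z}$ alone and that $\vectorname{z}$ is relation-agnostic in the strong information-theoretic sense $I(\vectorname{z};\vectorname{r})=0$, not merely geometrically disjoint. Reconciling the information-theoretic and geometric notions of relation-agnosticity---ensuring both describe the same property of $f$---is the delicate link that makes the contradiction go through.
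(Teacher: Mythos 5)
Your central argument has a genuine flaw: it proves too much. You claim that, since $I(\vectorname{z};\vectorname{r})=0$, the data processing inequality implies no map $h$ applied to $f$'s output can carry any information about $\vectorname{r}$, and you explicitly set yourself the task of "ruling out that some clever downstream transformation of $\vectorname{z}$ recovers $\vectorname{r}$." But the paper's own architecture is exactly such a downstream transformation: the relational function $\xi$ (the AST together with $\rho$) computes $\vectorname{r}$ as a function of $f$'s outputs $(\vectorname{z}_g, \mathbb{Z_L})$, and \cref{prop:separate_relational_model} is precisely what licenses that design --- it says $f$ \emph{itself} cannot serve as the relational model, not that $\vectorname{r}$ is unrecoverable from $f$'s outputs. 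The slip is an equivocation over what "$\vectorname{z}$" denotes in the DPI step. Relation-agnosticity (\cref{def:RelAgnRepr}, \cref{lemma:fLearnsDisjoint}) is a statement about each \emph{view-specific} representation $\vectorname{z}_v$ individually: none of them explicitly encodes the cross-view relationship. It does not follow that the \emph{joint} collection $(\vectorname{z}_g, \vectorname{z}_{l_1},\ldots,\vectorname{z}_{l_k})$ is independent of $\vectorname{r}$ --- relational information is synergistic, emerging only when the views are considered together --- and your DPI chain requires independence for whatever tuple $h$ actually consumes. Taken at face value, your argument would render the paper's sufficient learner impossible, so it cannot be the right formalization of the proposition.

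The correct argument is the one you relegate to "complementary geometric intuition," which is essentially the paper's entire proof: $f$ is a \emph{unary} function, processing one view $\vectorname{x}_v$ at a time; for $f$ to model $\vectorname{r}$ it would have to satisfy view-unification, \ie, output the same vector $\vectorname{r}$ whether $\vectorname{x}_v=\vectorname{g}$ or $\vectorname{x}_v=\vectorname{l}\in\mathbb{L}$; but \cref{lemma:fLearnsDisjoint} forces $f$'s outputs on global and local views into disjoint $\epsilon$-neighbourhoods, a direct contradiction. The unarity of $f$ is the load-bearing fact here, because it is what forces "modelling $\vectorname{r}$ with $f$" to mean "every view maps to the same output vector" --- no information-theoretic machinery is needed, and none can substitute for it. Promote that paragraph to your main argument and drop the DPI argument, and you have the paper's proof.
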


\begin{proof}
    $f: \vectorname{x}_v \rightarrow \vectorname{z}_v$ is a unary function that takes as input a (global or local) view $\vectorname{x}_v$ of an image $\vectorname{x}$ and produces view-specific (\cref{lemma:fLearnsDisjoint}) representations $\vectorname{z}_v$ for a downstream function $g: \vectorname{z}_v \rightarrow \vectorname{y}$.
    
    For $f$ to model the cross-view relationships, it must output the same vector $\vectorname{r}$ irrespective of whether $\vectorname{x}_v = \vectorname{g}$ or $\vectorname{x}_v = \vectorname{l} \in \mathbb{L}$, \ie~whether $\vectorname{x}_v$ is a global or a local view of the input image $\vectorname{x}$ (\emph{view-unification} property of $\xi$). However, \cref{lemma:fLearnsDisjoint} prevents this from happening by requiring the output space of $f$ to be relation-agnostic. Hence, $f$ cannot be used to model $\vectorname{r}$.
\end{proof}

Thus, to bridge the information gap, a learner must have distinct sub-models that individually satisfy the properties of being relation-agnostic and relation-aware. Only such a learner could qualify as being sufficient for an instance of $\mathcal{P}_\text{FGVC}$.

\myparagraph{Intuition:} In this section, we have effectively proven that the properties of relation-agnosticity and relation-awareness are dual to each other. We show that while relation-agnosticity is not sufficient, it is a necessary condition for encoding the complete label information $I(\vectorname{x}; \vectorname{y})$. We also show that a disjoint encoder cannot be used to model the two properties alone without violating one of the necessary criteria. The requirement of a separate, relation-aware sub-model follows from here.

\subsection{Learning Relation-Agnostic and Relation-Aware Representations}

\cref{fig:model_diagram} depicts the end-to-end design of our framework. Derived from our theoretical findings, it comprises of both the relation-agnostic agnostic encoder $f$, and the cross-view relational function, $\xi$, expressed as a composition of the Attribute Summarization Transformer, AST, and a network for view-unification, $\rho$. Below, we elaborate on each of these components.

\myparagraph{Relation-Agnostic Representations:} 
We follow recent literature \cite{Wei2017SCDA, Zhang2021MMAL} for localizing the object of interest in the input image $\vectorname{x}$ and obtaining the global view $\vectorname{g}$ by thresholding the final layer activations of a CNN encoder $f$ and detecting the largest connected component in the thresholded feature map.
We obtain the set of local views $\{\vectorname{l}_1, \vectorname{l}_2\, \ldots, \vectorname{l}_k\}$ as sub-crops of $\vectorname{g}$ (more details in \cref{subsec:expt_settings_datasets}).
%
Following the primary requirement of \cref{prop:Ixy_Izy_gap}, we produce relation-agnostic representations by propagating $\vectorname{g}$ and $\vectorname{l}_i$ through a CNN encoder $f$ that independently encodes the two view families as $\vectorname{z}_g$ = $f(\vectorname{g})$ and $\vectorname{z}_{l_i} = f(\vectorname{l}_i)$.

\label{subsec:disj_repr_cls_prx}

\myparagraph{Relational Embeddings:}
The second requirement, according to \cref{prop:Ixy_Izy_gap}, for completely learning $I(\vectorname{x}; \vectorname{y})$
is to minimize 
$I(\vectorname{x}; \vectorname{r} | \vectorname{z})$, \ie, the uncertainty about the relational information $\vectorname{r}$ encoded in $\vectorname{x}$, given a relation-agnostic representation $\vectorname{z}$. However, according to \cref{prop:separate_relational_model}, we cannot perform the same using the relation-agnostic encoder $f$. Contrary to existing relational learning literature \cite{Park2019RKD, Patacchiola2022RelationalReasoningSSL} that assumes the nature of relationships to be known beforehand, we take a novel approach that models cross-view relationships as learnable representations of the input $\vectorname{x}$. We follow the definition of the relationship modelling function $\xi: (\vectorname{g}, \mathbb{L}) \rightarrow \vectorname{r}$, that takes as input relation-agnostic representations of the global view $\vectorname{z}_g$ and the set of local views $\mathbb{Z_L} = \{\vectorname{z_{l_1}, z_{l_1}, ... \: z_{l_k}}\}$, and outputs a relationship vector $\vectorname{r}$, satisfying the \emph{View-Unification} and \emph{Permutation Invariance} properties.

We satisfy the Permutation Invariance property by aggregating the local representations via a novel Attribute Summarization Transformer (AST).
We form a matrix whose columns constitute a learnable summary embedding $\vectorname{z}_\mathbb{L}$ followed by the local representations $\vectorname{z}_{l_i}$ as $\mathbf{Z}_\mathbb{L}' = [\vectorname{z_\mathbb{L}, z_{l_1}, z_{l_1}, ... \: z_{l_k}}]$.
We compute the self-attention output $\vectorname{z}'_*$ for each column $\vectorname{z}_*$ in $\mathbf{Z}_\mathbb{L}'$ as $\vectorname{z}'_* = \vectorname{a} \cdot \mathbf{Z}_\mathbb{L}\mathbf{W}$, where $\vectorname{a} = \sigma \left( (\vectorname{z}_* \textbf{W}_q) \cdot (\mathbf{Z}_\mathbb{L}\mathbf{W})^T / \sqrt{D} \right)$,
and $D$ is the embedding dimension.
By iteratively performing self-attention operations among the columns of $\mathbf{Z}_\mathbb{L}'$, AST aggregates information across all the local attributes into the final learnable output of $\vectorname{z}_\mathbb{L}$.
Unlike the usual vision transformer \cite{Vaswani2017}, we omit the usage of positional embeddings, as doing so provides better permutation invariance \cite{Naseer2021IntriguingViT}.

For satisfying the View-Unification property, we introduce a simple feed-forward multilayer perceptron that learns the mapping $\rho: (\vectorname{z_g}, \vectorname{z}_\mathbb{L}) \rightarrow \vectorname{r}$. It takes as input the representation of the global view $\vectorname{z}_g$ and the summary of the set of local views $\vectorname{z}_\mathbb{L}$, and outputs the relationship as a learned vector $\vectorname{r}$. Thus, in our construction, the AST along with $\rho$, constitute the relation modelling function $\xi$.

\myparagraph{Learning Relational Proxies:} The representations $\vectorname{z}_g, \vectorname{z}_\mathbb{L}$ and $r$ in unison encode the full semantic information $\vectorname{y}$ in $\vectorname{x}$ (\cref{prop:Ixy_Izy_gap}). To alleviate the low inter-class variance in $\mathcal{P}_\text{FGVC}$, metric learning has been shown to be an effective \cite{Cui2016FgvcDml} approach.
Furthermore, approaches like \cite{Movshovitz-Attias2017ProxyNCA} and \cite{Kim2020CVPR} for metric learning have shown that substituting pairwise comparisons with assignment to a fixed set of learnable class proxies reduces the training-time complexity from a large polynomial like $\mathcal{O}(n^2)$ or $\mathcal{O}(n^3)$ to near linear $\mathcal{O}(c.n)$, where $c$ is the number of classes in a dataset, $n$ is the number of train-set datapoints, and $c \ll n$.
For this purpose, we contrast instance representations across classes through class proxy vectors that are informed by both the view specific and relational representations via learning the conditional distribution $p(\vectorname{y} | \vectorname{z}_g, \vectorname{z}_\mathbb{L}, \vectorname{r})$. We term such class proxies, Relational Proxies, as they leverage cross-view relationship information for encoding class semantics.

Consider a set of $c$ learnable class proxy vectors $\mathbb{P} = \{\vectorname{p}_1, \vectorname{p}_2, ... \: \vectorname{p}_c \}$, where $c$ is the number of fine-grained classes.
Here, we present a novel formulation of the proxy-anchor loss \cite{Kim2020CVPR} in cross-entropic terms that allows us to conform to the requirement of \cref{axiom:minCE} in the fine-grained setting.
Specifically, for each of the representations $\omega \in \{\vectorname{z}_g, \vectorname{z}_l, \vectorname{r}\}$ for all $\vectorname{x} \in \mathbb{X}$, we minimize the following:
\begin{equation}
    \centering
    \mathcal{L}_\text{rproxy} = - \frac{1}{c} \sum_{\vectorname{p} \in \mathbb{P}} \log \left( \frac{1}{\psi^+ \cdot \psi^-} \right),
    \;\;
    \begin{cases}
        \psi^+ = 1 + \displaystyle\sum_{\omega \in \Omega} e^{-\alpha(s(\omega, \vectorname{p}) - \delta)} \\
        \psi^- = 1 + \displaystyle\sum_{\omega \in \thickbar{\Omega}} e^{\alpha(s(\omega, \vectorname{p}) + \delta)} \\
    \end{cases}
\end{equation}
where $\Omega$ is the set of representations in a mini-batch for which $\vectorname{p}$ is the true class proxy, $\thickbar{\Omega}$ is one for which $\vectorname{p}$ is not the true class proxy,
and $s(\cdot, \cdot)$ computes the cosine distance.
$\psi^+$ helps align matching $(\omega, \vectorname{p})$ pairs close together in the representation space (since $\mathcal{L}_\text{rproxy}$ follows a cross-entropic form, it does not violate the relation-agnosticity of $f$, as proven in \cref{lemma:fLearnsDisjoint}, with a more detailed note in \cref{subsec:proxy_relation_agnosticity)} of the appendix), while $\psi^-$ helps embedding non-matching $(\omega, \vectorname{p})$ pairs farther apart. The scaling parameter $\alpha$ along with the margin parameter $\delta$ control the intensity with which the alignment and discrimination are performed.
$1 / (\psi^+ \cdot \psi^-)$ gives a probability indicating how closely the learned representation space reflects the semantic structure in $\mathbb{X}$.
$\mathcal{L}_\text{rproxy}$ thus computes the cross-entropy loss between the ground-truth and the predicted class distributions over the set of proxies.

\myparagraph{Inference:} Given an input image $\vectorname{x}$, we compute its global ($\vectorname{z}_g$), summary of local ($\vectorname{z}_\mathbb{L}$), and relational ($\vectorname{r}$) representations using $f$, AST and $\rho$ as explained above. We then predict the class probability distribution $\hat{\vectorname{y}}$ of these representations by computing their soft-assignment scores across the relational proxies. The assignment score for each proxy $\vectorname{p} \in \mathbb{P}$ is computed as follows:
\begin{equation*}
    \vectorname{\hat{y}}_\vectorname{p} = \mathlarger{\sum}_{\omega \in \{\vectorname{z}_L, \vectorname{z}_g, \vectorname{r}\}} \frac{e^{s(\omega, \vectorname{p})}}{\displaystyle\sum_{\vectorname{p}' \in \mathbb{P}} e^{s(\omega, \vectorname{p}')}}
\end{equation*}
The class corresponding to the relational proxy with the highest assigned score is returned as the prediction.

\section{Experiments}
\label{sec:expt}

We now present the implementation details of Relational Proxy, and the results obtained upon evaluating it on benchmark FGVC datasets. We also discuss observations from ablation studies that we performed to validate our theoretical foundations, as well as the implementation specific choices that we made, along with qualitative visualizations of the learned cross-view local relationships.

\subsection{Experimental Settings and Datasets}
\label{subsec:expt_settings_datasets}
\myparagraph{Implementation details} -- We implement our Relational Proxy model using the PyTorch \cite{Paszke2017PyTorch} deep learning framework, on an Ubuntu 20.04 workstation with a single NVIDIA GeForce RTX 3090 GPU, an 8-core Intel Xeon processor and 32 GBs of RAM. Since we proposed a proxy-based approach for learning the relational metric space, we do not have a dependency on batch-size for the purpose of negative sampling as part of our metric learning phase, which enables us to train our entire model end-to-end on a single GPU. Also by the virtue of using class-proxies, the convergence time is reduced by a significant amount compared to pairwise losses.

\myparagraph{Hyperparameter settings} -- For initial training stability, we consider five disjoint locations (four corners and the centre) of $\vectorname{g}$ to be the set of local views. As training progresses, we also allow the model to learn from an increased number views obtained via random cropping. In the same way at inference time, the local views constitute a combination of the five disjoint crops along with some random crops.
We found that the optimal number of local views $k$ to be equal to $7$ for FGVC Aircraft, Stanford Cars and both the cultivar datasets. For CUB and NABirds, $k=8$ gave the best performance.
We use ResNet50 \cite{He2016DeepRL} pretrained on ImageNet \cite{Deng2009ImageNetAL} as the backbone of our relation-agnostic encoder $f$. 
In Sec. 1.3 of the supplementary, we also provide evaluations using VGG-16 \cite{Simonyan2015VGG16} to show that the performance gains achieved by our model do not depend on the specific backbone. We train our full Relational Proxy model end-to-end for 200 epochs using the stochastic gradient descent optimizer with an initial learning rate of 0.001 (decayed by a factor of 0.1 every 50 epochs), a momentum of 0.9, and a weight decay of $10^{-4}$.

\myparagraph{Datasets and Evaluation} --
We evaluate our model on the four most common fine-grained visual categorization benchmarks (number of classes and train/test splits respectively in brackets): FGVC Aircraft \cite{Maji2013FGVCAircraft} (100 | 6667/3333), Stanford Cars \cite{krause2013StanfordCars} (196 | 8144/8041), CUB \cite{Wah2011CUB} (200 | 5994/5794), and NA Birds \cite{horn2015NABirds} (555 | 23,929/24,633).
For large scale benchmark evaluation, we choose the iNaturalist 2017 dataset which consists of 13 super-categories that have been split into a total of 5089 fine-grained categories with 675,170 training and 182,707 test images. We also perform experiments on two challenging datasets of the cultivar domain that offer very low inter-class variations, namely Cotton Cultivar \cite{Yu2020SoyCottonCultivar} (80 | 240/240) and Soy Cultivar \cite{Yu2020SoyCottonCultivar} (200 | 600/600). We use classification accuracy as our metric for evaluating the performance of a model.

\begin{table}[t]
    \centering
    \resizebox{\textwidth}{!}{
    \begin{tabular}{l|c|c|c|c|c|c|c}
         \hline
         \multirow{2}{*}{\textbf{Method}} & \multicolumn{5}{c|}{\textbf{Benchmark}} & \multicolumn{2}{c}{\textbf{Cultivar}} \\
         \cline{2-8}
         & \textbf{FGVC Aircraft} & \textbf{Stanford Cars} & \textbf{CUB} & \textbf{NA Birds} & \textbf{iNaturalist} & \textbf{Cotton} & \textbf{Soy} \\
         \hline
         MaxEnt \cite{dubey2018MaxEntFGVC} NeurIPS'18 & 89.76 & 93.85 & 86.54 & - & - & - & - \\
         DBTNet \cite{zheng2019DBT} NeurIPS'19 & 91.60 & 94.50 & 88.10 & - & - & - & - \\
         StochNorm \cite{kou2020StochNorm} NeurIPS'20 & 81.79 & 87.57 & 79.71 & 74.94 & 60.75 & 45.41 & 38.50 \\

         MMAL \cite{Zhang2021MMAL} MMM'21 & 94.70 & 95.00 & 89.60 & 87.10 & 69.85 & 65.00 & 47.00 \\
         FFVT \cite{Wang2021FFVT} BMVC'21 & 79.80 & 91.25 & 91.65 & 89.42 & 70.30 & 57.92 & 44.17 \\
         CAP \cite{Behera2021CAP} AAAI'21 & 94.90 & 95.70 & 91.80 & 91.00 & - & - & - \\
         TransFG \cite{He2022TransFG} AAAI'22 & 80.59 & 94.80 & 91.70 & 90.80 & 71.70 & 45.84 & 38.67 \\
         \textbf{Ours (Relational Proxy)} & \textbf{95.25} \textpm~0.02 & \textbf{96.30} \textpm~0.04 & \textbf{92.00} \textpm~0.01 & \textbf{91.20} \textpm~0.02 & \textbf{72.15} \textpm~0.03 & \textbf{69.81} \textpm~0.04 & \textbf{51.20} \textpm~0.02 \\
         \hline
    \end{tabular}}
    \caption{
    Comparison of classification accuracies obtained by our method (averaged over 5 independent runs) on standard FGVC datasets with current state-of-the-art approaches.
    }
    \label{tab:sota_comparison}
\end{table}

\subsection{Comparison with State of the Art}
\textbf{Benchmark Datasets} -- In \cref{tab:sota_comparison}, we report the performance of our method on benchmark datasets along with existing SotA approaches. StochNorm \cite{kou2020StochNorm} presents a novel way to refactor batch normalization that helps prevent overfitting for the task of FGVC. The novel training routine proposed in MaxEnt\cite{dubey2018MaxEntFGVC} improves FGVC performance by maximizing the entropy of the output probability distribution of a CNN. By designing a computationally inexpensive bilinear feature transformation mechanism for CNNs, DBT \cite{zheng2019DBT} achieves competitive performance on benchmark FGVC datasets. MMAL \cite{Zhang2021MMAL} is one of the most competitive models for FGVC Aircraft and Stanford Cars, which extracts the most informative global and local views by analyzing the activation maps of the final layer of a CNN, and embeds them in a relation-agnostic representation space.
TransFG \cite{He2022TransFG} proposes a vision transformer based technique for extracting informative local patches, achieving SotA performance on iNaturalist, and promising results on CUB and NA Birds. By learning a context aware attention pooling mechanism, CAP \cite{Behera2021CAP} reports SotA performance on all benchmark datasets other than iNaturalist.
From \cref{tab:sota_comparison}, we see that our method
surpasses the SotA on all four benchmarks by significant margins. Specifically, we beat the SotA on Stanford Cars by $0.60\%$, on iNaturalist by 0.45\%, on FGVC Aircraft by $0.35\%$, and on both CUB and NA Birds by $0.20\%$.

Cars and Aircrafts can largely vary in color, texture and custom, part-specific styles within a category. However, the geometry of the overall object (represented by cross-view relationships) within a class remains fairly constant. This leaves room for a large amount of relational information to be captured. This also holds true for the iNaturalist dataset, as the local-to-global emergent relationships can be used to discriminate between both coarse-grained (super) and fine-grained (sub) categories. For the bird datasets (CUB, NABirds), although this relational information is still there, most categories can be told apart by color, texture and local-attribute specific information, if they are clearly visible. For this reason, the accuracy gains obtained in the Cars and Aircraft datasets surpass those obtained for the birds.



\textbf{Cultivar Datasets} -- For the highly challenging datasets of the cultivar domain, \ie, Cotton and Soy Cultivar, FFVT \cite{Wang2021FFVT} provides state-of-the-art results by using a specialized feature fusion technique for vision transformers. As can be seen in \cref{tab:sota_comparison}, our model, by leveraging cross-view relational embeddings, manages to provide a performance boost exceeding $4\%$ over the current SOTA on the cultivar datasets.
Cultivar datasets have very low inter-class differences. Cross-view relational information like edge curvature, relative angles between leaf sub-parts, width to height ratio, convergence patterns of leaf ends, etc., largely determine the uniqueness of a category. For this reason, our method is extremely effective when applied to such domains.

\begin{table}[t]
    \centering
    \resizebox{\textwidth}{!}{
    \begin{tabular}{c|c|c|c|c|c|c|c|c}
         \hline
         \multirow{2}{*}{ID} & Relation-Agnostic & \multirow{2}{*}{AST} & \multirow{2}{*}{RelationNet} & Learnable & \multirow{2}{*}{Proxies} & \multirow{2}{*}{Aircraft} & \multirow{2}{*}{CUB} & Stanford \\
         & Encoder & & & Relation & & & & Cars \\
         \hline
         1. & \ding{51} & & & & & 94.60 & 91.25 & 95.21 \\
         2. & \ding{51} & \ding{51} & & & & 94.91 & 91.50 & 95.62 \\
         \hline
         3. & \ding{51} & \ding{51} & \ding{51} & \ding{51} & & 95.13 & 91.90 & 96.15 \\
         \hline
         4. & \ding{51} & & & \ding{51} & \ding{51} & 94.92 & 91.55 & 95.70 \\
         5. & \ding{51} & \ding{51} & & \ding{51} & \ding{51} & 95.10 & 91.81 & 96.05 \\
         6. & \ding{51} & & \ding{51} & \ding{51} & \ding{51} & 95.05 & 91.73 & 95.93 \\
         7. & \ding{51} & \ding{51} & \ding{51} & \ding{51} & \ding{51} & \textbf{95.25} & \textbf{92.00} & \textbf{96.30} \\
         \hline
    \end{tabular}}
    \caption{Results of ablating the key components of our Relational Proxy model (sufficient learner).
    Grouped so as to better illustrate the effect of learning cross-view relationships.
    Note that the meaning of non-existence of a component may vary according to context / other row elements. Refer to the corresponding paragraph in \cref{subsec:ablation} of the main text for further details.
    }
    \label{tab:ablation_sufficiency}
\end{table}

\begin{table}[t]
    \centering
    \resizebox{\textwidth}{!}{
    \begin{tabular}{c|c|c|c|c|c|c}
        \hline
        \multirow{2}{*}{ID} & Attribute & Global & Relational & \multirow{2}{*}{FGVC Aircraft} & \multirow{2}{*}{CUB} & \multirow{2}{*}{Stanford Cars}\\
        & Summary ($\vectorname{z}_\mathbb{L}$) & Representation ($\vectorname{z}_g$) & Representation ($\vectorname{r}$) &  &  &  \\
         \hline
         1. & \ding{51} & \ding{51} & & 94.91 & 91.50 & 95.62 \\
         2. & \ding{51} & & \ding{51} & 94.85 & 91.58 & 95.75 \\
         3. & & \ding{51} & \ding{51} & 94.60 & 91.47 & 95.51 \\
         4. & \ding{51} & \ding{51} & \ding{51} & \textbf{95.25} & \textbf{92.00} & \textbf{96.30} \\
         \hline
    \end{tabular}}
    \caption{Results of ablating the Proxy Conditioning Representations. Note that only the output representation vectors were ablated here, and not the entire model component producing them. The latter has been studied independently with findings reported in \cref{tab:ablation_sufficiency}.}
    \label{tab:ablation_representations}
\end{table}

\begin{table}[t]
\begin{minipage}{0.5\linewidth}
    \centering
    \resizebox{\textwidth}{!}{
    \includegraphics{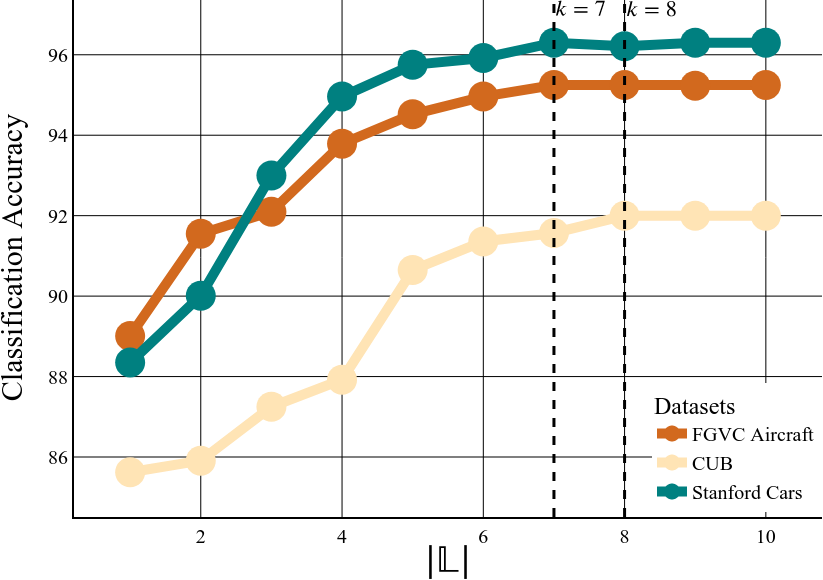}
    }
    \captionof{figure}{Effect of varying the number of local views $|\mathbb{L}|$
    }
    \label{fig:k_values}
\end{minipage}
\hspace{1pt}
\begin{minipage}{0.45\linewidth}
    \centering
    \resizebox{\textwidth}{!}{
    \begin{tabular}{c|c|c}
        \hline
         Method & T-ImageNet & D-ImageNet \\
         \hline
         $\vectorname{z}_g, \vectorname{z}_\mathbb{L}$ & 88.75 & 91.30 \\ 
         $\vectorname{z}_g, \vectorname{z}_\mathbb{L}, \vectorname{r}$ & 88.91 & 92.75 \\ 
         \hline
         $\Delta$ & 0.16 & \textbf{1.45} \\
         \hline
    \end{tabular}}
    \caption{
    Comparison of accuracy gains obtained by using the relational information $\vectorname{r}$ along with the view-specific representations $\vectorname{z}_g$ and $\vectorname{z}_\mathbb{L}$ (second row) over using only relation-agnostic representations (first row)
    on the coarse-grained Tiny (T) and fine-grained Dogs (D) subsets of ImageNet. As can be observed, the performance gain $\Delta$ by accounting for the relational information is significantly more in the fine-grained setting than in the coarse-grained one.
    }
    \label{tab:coarse_vs_fine_grained}
    \end{minipage}
\end{table}

\subsection{Ablation Studies}
\label{subsec:ablation}
We perform the following three classes of ablation studies:


\textbf{Key components of the sufficient learner} -- \cref{tab:ablation_sufficiency} shows the results of ablating the key components of our model. The relation agnostic encoder being the most fundamental component, cannot be removed, and therefore appears in all the rows. Row 1 thus represents training a simple classification head on top of the representations obtained from the relation-agnostic encoder. Row 2 denotes the result of aggregating the local views, computing a predefined relationship function, specifically the distance between the local and global representations, and minimizing a Huber loss between the relational distance value between instances of the same class. Row 3 introduces the idea of learnable relational vectors (instead of predefined functions like distances). Since cross-view relationships are unique to a class, we aim to embed the relational vectors in a metric space by minimizing a pairwise contrastive loss across classes. However, as noted in recent metric learning literature \cite{Movshovitz-Attias2017ProxyNCA, Kim2020CVPR}, computing pairwise losses can be expensive and lead to slower convergence. Motivated by this, we introduce the idea of relational proxies in Rows 4 - 7. Row 4 replaces the AST and RelationNet by simple concatenation of the inputs and propagation through a linear layer. Row 5 and 6 individually show the effects of replacing the AST and RelationNet with linear layers. Finally, Row 7 denotes the performance of our model with all components included. 
    
Rows 2 and 3 demonstrate the importance of modelling cross-view relationships specifically as a learnable metric space embedding. Rows 5, 6 and 7 show the contribution of our AST in summarizing the local attributes, as well as the fact that the cross-view relationship is non-linear in nature.
    
\textbf{Conditioning of the relational proxies} -- The relational proxies in our model are conditioned by three representations of the input $\vectorname{x}$ -- the summary of the local attributes $\vectorname{z}_\mathbb{L}$, the representation of the global view $\vectorname{z}_g$, and the relational vector $\vectorname{r}$. We study the contribution of each of these representations and summarize our findings in \cref{tab:ablation_representations}. These results demonstrate that the information encoded in all three representations are necessary for learning the complete set of class attributes.

\textbf{Results on ImageNet subsets} -- In order to validate whether our findings are in fact particularly applicable to the fine-grained setting, we perform experiments to compare the performance boost provided by our method over a vanilla relation-agnostic encoder, between coarse-grained (Tiny ImageNet\cite{Li2017TinyImageNet}) and fine-grained (Dogs ImageNet / Stanford Dogs \cite{Khosla2011StanfordDogs}) subsets of ImageNet.
Our findings are summarized in \cref{tab:coarse_vs_fine_grained}, which shows that our method does in fact provide a more significant improvement over a relation-agnostic encoder in the fine-grained setting.

\textbf{The optimal value of $k$ for the $k$-distinguishability criterion} -- \cref{fig:k_values} empirically illustrates the idea of $k$-distinguishability for a given local-crop size on the FGVC Aircraft, Stanford Cars and CUB datasets. For an instance of $\mathcal{P}_\text{FGVC}$, the performance of a model is strongly dependent on the number of local views it has access to. When the number of local views $|\mathbb{L}|$ is less than the minimum required number $k$, the classification performance is poor as the model does not have access to the minimum set of required fine-grained information. As $|\mathbb{L}|$ approaches $k$, the performance increases, reaching its maximum at $k$ (= 7 for FGVC Aircraft and Stanford Cars, and 8 for CUB). However, if $|\mathbb{L}|$ is increased beyond $k$, there is no further gain in performance, as the extra information is either redundant or semantically irrelevant.

\myparagraph{Correlation between $|\mathbb{L}|$ and local patch size} -- To determine the right computational trade-offs for our method, we perform a study to identify possible correlations between the number of local views $\mathbb{L}$ and size of local patches.
We trained our model on FGVC Aircraft \cite{Maji2013FGVCAircraft} by varying the number of local views $|\mathbb{L}|$ and the size of each local patch to identify their correlations. We present our results in \cref{tab:L_vs_patch_size}, where rows represent the number of local views $|\mathbb{L}|$ and the columns represent the side-length of each local patch. So, if the global view has spatial dimensions $N \times N$, each local patch would be of $N/t \times N/t$, where $t$ is the scaling factor that is varied across the columns. In summary, the rows represent increasing the number of local views top-down, and the columns represent increasing the patch-size left-to-right. The numbers are expressed as relative deviations from a reference of 95.25\%, i.e., the setting corresponding to our reported accuracy for FGVC Aircraft in \cref{tab:sota_comparison}.

\begin{wraptable}[9]{r}{0.4\textwidth}
    \centering
    \begin{tabular}{c|c|c|c|c}
         \hline
         $|\mathbb{L}|$ & $\mathbf{N/5}$ & $\mathbf{N/4}$ & $\mathbf{N/3}$ & $\mathbf{N/2}$ \\
         \hline
         $\mathbf{7}$ & -0.03 & -0.02 & 0.00 & -0.14 \\
         $\mathbf{12}$ & +0.01 & +0.02 & 0.00 & -0.11 \\
         $\mathbf{15}$ & +0.05 & +0.03 & +0.01 & -0.11 \\
         $\mathbf{18}$ & +0.05 & +0.03 & +0.00 & -0.10 \\
         \hline
    \end{tabular}
    \caption{Correlation between number of local views and size of local patches}
    \label{tab:L_vs_patch_size}
\end{wraptable}

From \cref{tab:L_vs_patch_size}, we can see that increasing the patch size beyond a certain point has a detrimental effect
as the local views tend to lose their granularity and degenerate into global views. Increasing the number of crops has a stronger improvement effect on performance if the patch size is small, thus influencing the value of $k$ accordingly. However, decreasing the patch size at the cost of an increased number of local views also has its downsides - the number of attention computations in the attribute summarization step increases quadratically. Thus $|\mathbb{L}|$ and the local patch size needs to be determined based on application specific accuracy requirements and the available computational resources.

\subsection{Visual Representations of Cross-View Local Relationships}
Our AST-based aggregation scheme allows us to visualize the local relationships that lead to the emergence of the global-view. We aim to construct a graph of local views for depicting the cross-view local relationships. The graph represents the manner in which the local views combine to form the overall object. The nodes of the graph represent the local views. Two nodes are connected via an edge if there exists a relationship between them. The thickness of the edges in the illustration is proportional to the degree of relatedness.
\begin{wrapfigure}[14]{r}{0.6\textwidth}
    \centering
    \includegraphics[height=3cm,width=7cm]{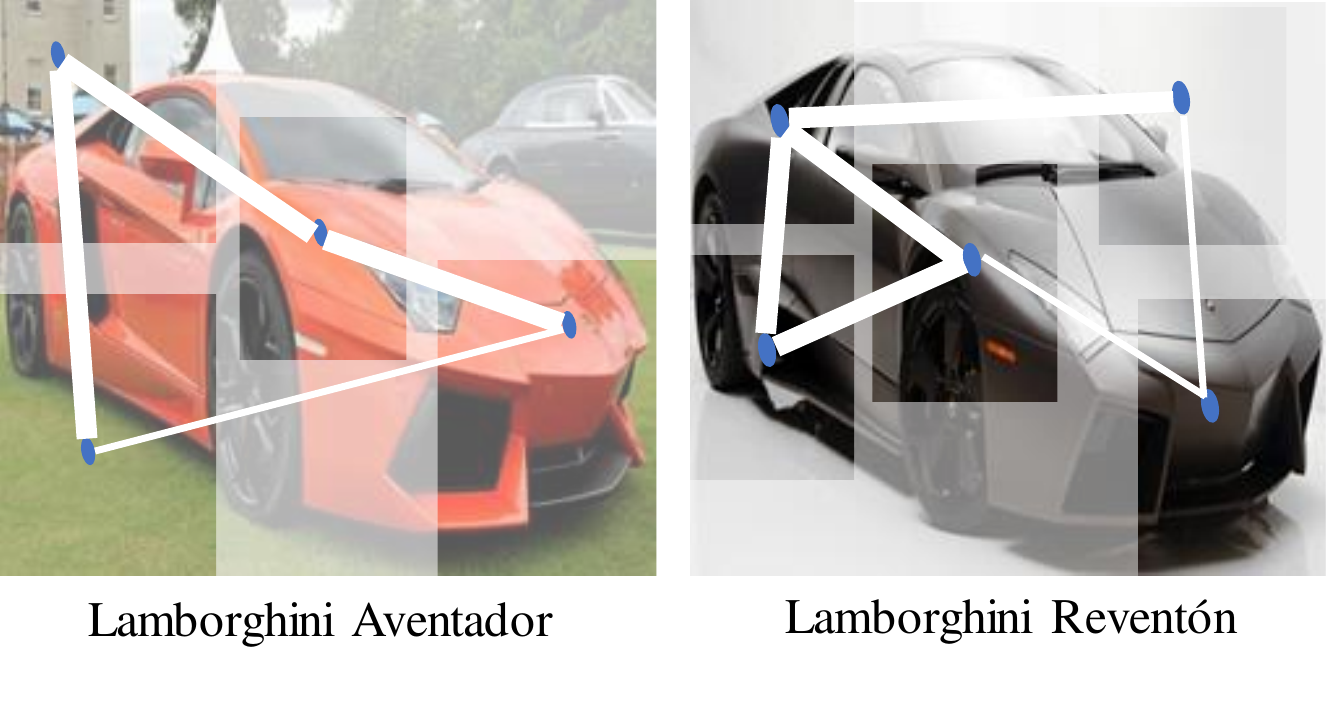}
    \caption{Visualization of the learned relationships across local-views. Despite close similarities between the two car categories, it can be seen that our model leverages discriminative cross-view relationships to tell the instances apart.}
    \label{fig:qualitative_result_main}
\end{wrapfigure}
We compute the topology of this graph by analyzing the final layer mutual attention values of the Attribute Summarization Transformer (AST). We add an edge between two local views if their mutual attention score is higher than a threshold (which we choose to be the average of all pairwise attention scores). The weight of the edge is proportional to the magnitude of attention. For the purpose of simplicity, we depict fewer local views in the visualization, than are actually used for computation. \cref{fig:qualitative_result_main} shows example graphs on images from the Stanford Cars dataset.
In Appendix \ref{sec:qual_res}, we provide more such qualitative results and based on these graphs, we provide an analysis of scenarios under which even relational information cannot distinguish between certain fine-grained categories.

\section{Conclusion and Discussion}
\label{sec:concl}
Starting with the idea of $k$-distinguishability, we derived the necessary and sufficient conditions that a model must satisfy in order to completely capture the fine-grained information in an image. We proved that a model needs to simultaneously encode both view-specific and cross-view relational properties of an object in order to bridge the information gap that its representations have with the semantic content in the input image. Based on our theoretical findings, we designed Relational Proxies, a method that achieves state-of-the-art results on benchmark FGVC datasets by learning class representations conditioned with cross-view relationships.
By introducing a theoretically rigorous framework, we believe that our work opens up new avenues for studying the problem of FGVC in a more systematic manner. One immediate potential outcome of our work that we foresee is the development of explainable fine-grained features. Such features can be used for computing a minimal set of fine-grained attributes to limit compute time/resources, or to perform tasks like cross-modal retrieval in domains with large modality gap.

\myparagraph{Limitations} -- The process of obtaining local views in our method is somewhat of an uninformed, generic cropping methodology on the global view of the object, which may not necessarily always yield the best set of local object parts. More informed ways of detecting novel object parts from which the global view emerges can lead to obtaining at par performance but with fewer local views.

\myparagraph{Societal Impacts} -- The rigorous theoretical basis of our work has a positive societal impact, which not only makes our methodology transparent and easy to analyze, but also provides a framework to study the foundations of FGVC in general.
So far, we are not aware of any negative societal impact that is specific to our methodology. However, as with all data-driven approaches, underlying biases in the datasets on which our model is trained would influence the patterns learned by it.

\section*{Acknowledgements}
This work has been partially supported by the ERC 853489--DEXIM, by the DFG--EXC number 2064/1--Project number 390727645, and as part of the Excellence Strategy of the German Federal and State Governments.

\clearpage

\bibliographystyle{ieee_fullname}
\bibliography{bibliography}

\section*{Checklist}

\begin{enumerate}

\item For all authors...
\begin{enumerate}
  \item Do the main claims made in the abstract and introduction accurately reflect the paper's contributions and scope?
    \answerYes{}
  \item Did you describe the limitations of your work?
    \answerYes{}
  \item Did you discuss any potential negative societal impacts of your work?
    \answerYes{}
  \item Have you read the ethics review guidelines and ensured that your paper conforms to them?
    \answerYes{}
\end{enumerate}

\item If you are including theoretical results...
\begin{enumerate}
  \item Did you state the full set of assumptions of all theoretical results?
    \answerYes{}
        \item Did you include complete proofs of all theoretical results?
    \answerYes{}
\end{enumerate}

\item If you ran experiments...
\begin{enumerate}
  \item Did you include the code, data, and instructions needed to reproduce the main experimental results (either in the supplemental material or as a URL)?
    \answerNA{Code and pre-trained models will be made public upon paper acceptance. Details of all experimental settings required to reproduce our results are provided in \cref{subsec:expt_settings_datasets}.}
  \item Did you specify all the training details (e.g., data splits, hyperparameters, how they were chosen)?
    \answerYes{}
        \item Did you report error bars (e.g., with respect to the random seed after running experiments multiple times)?
    \answerNo{But the numbers we report are the means of 5 runs with different random seeds.}
        \item Did you include the total amount of compute and the type of resources used (e.g., type of GPUs, internal cluster, or cloud provider)?
    \answerYes{}
\end{enumerate}

\item If you are using existing assets (e.g., code, data, models) or curating/releasing new assets...
\begin{enumerate}
  \item If your work uses existing assets, did you cite the creators?
    \answerYes{}
  \item Did you mention the license of the assets?
    \answerNA{}
  \item Did you include any new assets either in the supplemental material or as a URL?
    \answerNo{}
  \item Did you discuss whether and how consent was obtained from people whose data you're using/curating?
    \answerNA{}
  \item Did you discuss whether the data you are using/curating contains personally identifiable information or offensive content?
    \answerNA{}
\end{enumerate}

\item If you used crowdsourcing or conducted research with human subjects...
\begin{enumerate}
  \item Did you include the full text of instructions given to participants and screenshots, if applicable?
    \answerNA{}
  \item Did you describe any potential participant risks, with links to Institutional Review Board (IRB) approvals, if applicable?
    \answerNA{}
  \item Did you include the estimated hourly wage paid to participants and the total amount spent on participant compensation?
    \answerNA{}
\end{enumerate}

\end{enumerate}

\newpage

\section{Appendix}
\label{sec:appendix}

\subsection{Properties of the Relationship Modelling Function}
\label{subsec:props_of_xi}
\myparagraph{Intuitive Analogy:} The problem of local-to-global relation computation can be viewed as a bit-string-to-integer matching problem.
Consider 3 bits, say $b_1, b_2$ and $b_3$, corresponding to 3 local views. Let the global view be represented by an integer that can be encoded with 3 bits, say with a value of $g$ = 6, for this example. The problem then is to find the association of the integer 6 with its corresponding binary representation of 110. This association represents the cross-view relationship.

The first step towards solving this problem is to \emph{enumerate} all the possible ways in which the local views can combine (to produce any global view, not specifically g). The set of all such combinations will be given by $S = \{000, 001, 010,..., 110, 111\}$. The bit values encode the presence or absence of a particular view in the cross-view relationship. So, no matter what order we observe $b_1, b_2$ and $b_3$ in, we must output the same set $S$, as it is required to be an exhaustive enumeration. This is exactly what the property of permutation invariance achieves.
Once we have S, the next step is to \emph{find} the mapping $S, g \mapsto 110$, i.e, the correct binary encoding for the integer $g = 6$, which is accomplished by the property of view-unification.

\myparagraph{Purpose:} As illustrated through the above analogy, one can view the local-to-global relationship modelling function as an enumerative search algorithm - given a set of local views, it first \emph{enumerates} all possible ways in which they can combine to form a meaningful global view. Given that enumeration, it then \emph{finds} the target solution by learning to identify the correct combination that matches with the global-view representation. Thus, the enumerate operation needs to be permutation invariant, as it has to consider all possible combinations of the inputs, and the find operation needs to be a view-unifier by construction.

\myparagraph{Motivation:} Behind our specific design choice was the motivation to keep the enumerate and find steps separate. This allows the model to have dedicated representation spaces for the two distinct sub-tasks, which in turn facilitates better convergence.

\begin{figure}[!ht]
    \centering
    \includegraphics[height=4cm]{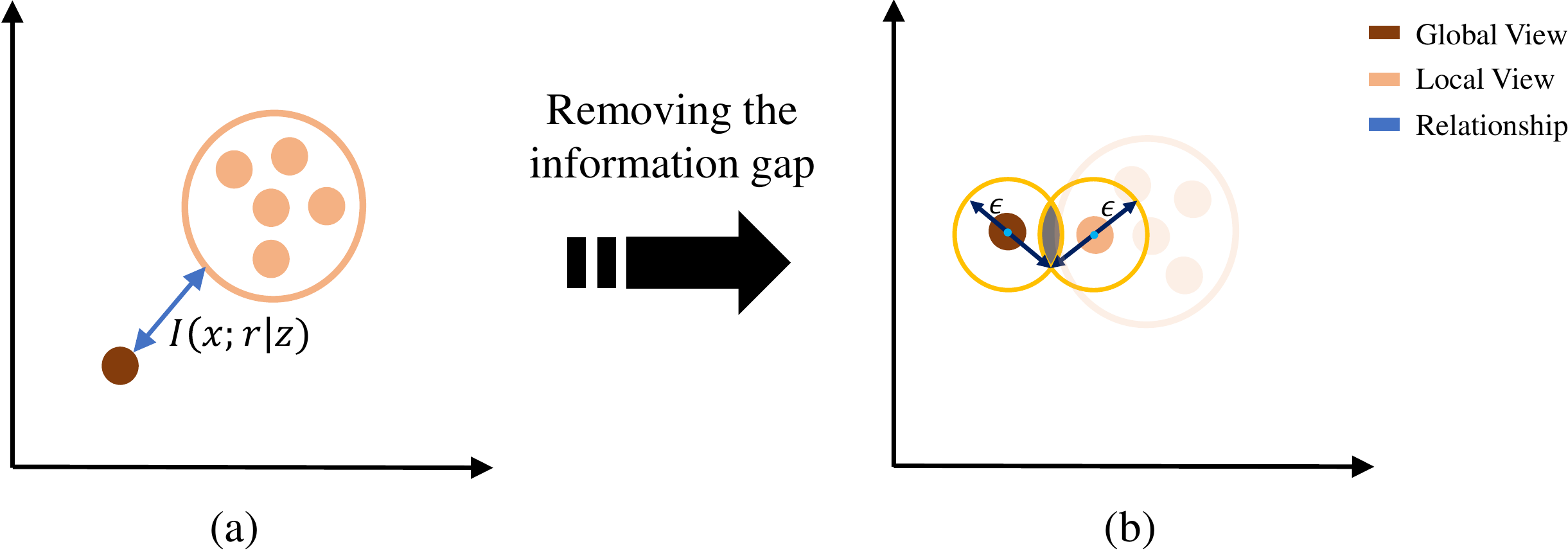}
    \caption{(a) A relation-agnostic representation space. (b) The $\epsilon$-neighborhoods of the global and local views begin colliding as the information gap is reduced.}
    \label{fig:geom_rel_agn}
\end{figure}

\subsection{Proofs of Additional Identities}
\begin{identity}
    Given a relation-agnostic representation $\vectorname{z}$ of $\vectorname{x}$, the only uncertainty that remains about the label information $\vectorname{y}$ can be quantified as the cross-view relational information $\vectorname{r}$, \ie, $I(\vectorname{x};\vectorname{y} | \vectorname{z}) = I(\vectorname{x};\vectorname{r})$.
    \label{identity:relational_uncertainty_z}
\end{identity}

\begin{proof}
    Using the chain rule for mutual information \cite{Federici2020MIB}, we can factorize the label information $\vectorname{y}$ contained in $\vectorname{x}$, \ie, $I(\vectorname{x}; \vectorname{y})$ as:
    \begin{equation}
        I(\vectorname{x}; \vectorname{y}) = I(\vectorname{x};\vectorname{y}|\vectorname{z}) + I(\vectorname{x};\vectorname{z})
        \label{Ixy_chainRule}
    \end{equation}

As evidenced by recent literature \cite{Luo2019CrossX, Zhang2021MMAL, Behera2021CAP, Choudhury2021UnsupervisedParts}, the label information in $\vectorname{x}$ can be expressed exclusively as a function of its global ($\vectorname{g}$) and local ($\vectorname{l}_i$) views. Thus, in quantitative terms, the label information $\vectorname{y}$ in $\vectorname{x}$ can also be factorized into \emph{relation-agnostic} and \emph{relation-aware} components as follows:
\begin{equation}
    I(\vectorname{x}; \vectorname{y}) = \underbrace{I(\vectorname{x};\vectorname{g}) + \sum\limits_{\vectorname{l} \in \mathbb{L}} I(\vectorname{x};\vectorname{l})}_\text{relation-agnostic} + \underbrace{I(\vectorname{x};\vectorname{r})}_\text{relation-aware}
    \label{eqn:Ixy_views}
\end{equation}

The relation-aware representation $\vectorname{r}$ is, unlike relation-agnostic representations, obtained explicitly based on the cross-view relationship. However, since $f$ computes $\vectorname{z}$ without considering any relational information, it only models the relation-agnostic component of \cref{eqn:Ixy_views}. Thus,
\begin{equation}
    I(\vectorname{x};\vectorname{z}) = I(\vectorname{x};\vectorname{g}) + \sum\limits_{\vectorname{l} \in \mathbb{L}} I(\vectorname{x};\vectorname{l})
    \label{eqn:Ixz_disjoint}
\end{equation}

Substituting the \emph{relation-agnostic} component of \cref{eqn:Ixy_views} with the L.H.S. of \cref{eqn:Ixz_disjoint}, and comparing it with \cref{Ixy_chainRule}, we get:
\begin{equation}
    I(\vectorname{x};\vectorname{y} | \vectorname{z}) = I(\vectorname{x};\vectorname{r})
\end{equation}
\end{proof}

\subsection{Geometric Relation Agnosticity}
\label{subsec:note_geom_rel_agn}
\cref{def:geometric-disjointness} is based on the fact that the information gap (derived in \cref{prop:Ixy_Izy_gap}) between the global and the local views has the effect that the two view families would be mapped to distinct locations in the representation space, and the separation between them would be proportional to the information gap, \ie, $I(\vectorname{x}; \vectorname{r} | \vectorname{z})$. \cref{def:geometric-disjointness} also mentions that relation-agnostic embeddings of the local and the global views must thus be well separated, \ie, the $\epsilon$-neighborhood of the global embedding $n_\epsilon(\vectorname{z}_g)$ must not intersect with those of the local embeddings $n_\epsilon(\vectorname{z}_l)$. In other words, the global embedding must be sufficiently far apart from each of the local embeddings.

\cref{fig:geom_rel_agn} depicts the geometric effect of removing the information gap from a relation-agnostic representation space. As proven in \cref{lemma:fLearnsDisjoint}, if the information gap is reduced using the same encoder $f$ that was used to obtain $\vectorname{z_g}$ and $\vectorname{z}_l$, the model starts mapping the global and the local views to identical regions in the representation space. This could potentially lead to the requirement of $k$-distinguishability to not be satisfied, as the unique information pertaining to at least one of the local views is lost upon merger with the global view (and vice-versa). It is thus a requirement for a sufficient learner to preserve the relation-agnosticity in the representation space of $f$.

\subsection{Relation-Agnosticity of Relational Proxies}

\begin{figure}[!t]
    \centering
    \includegraphics[height=5cm,width=0.8\textwidth]{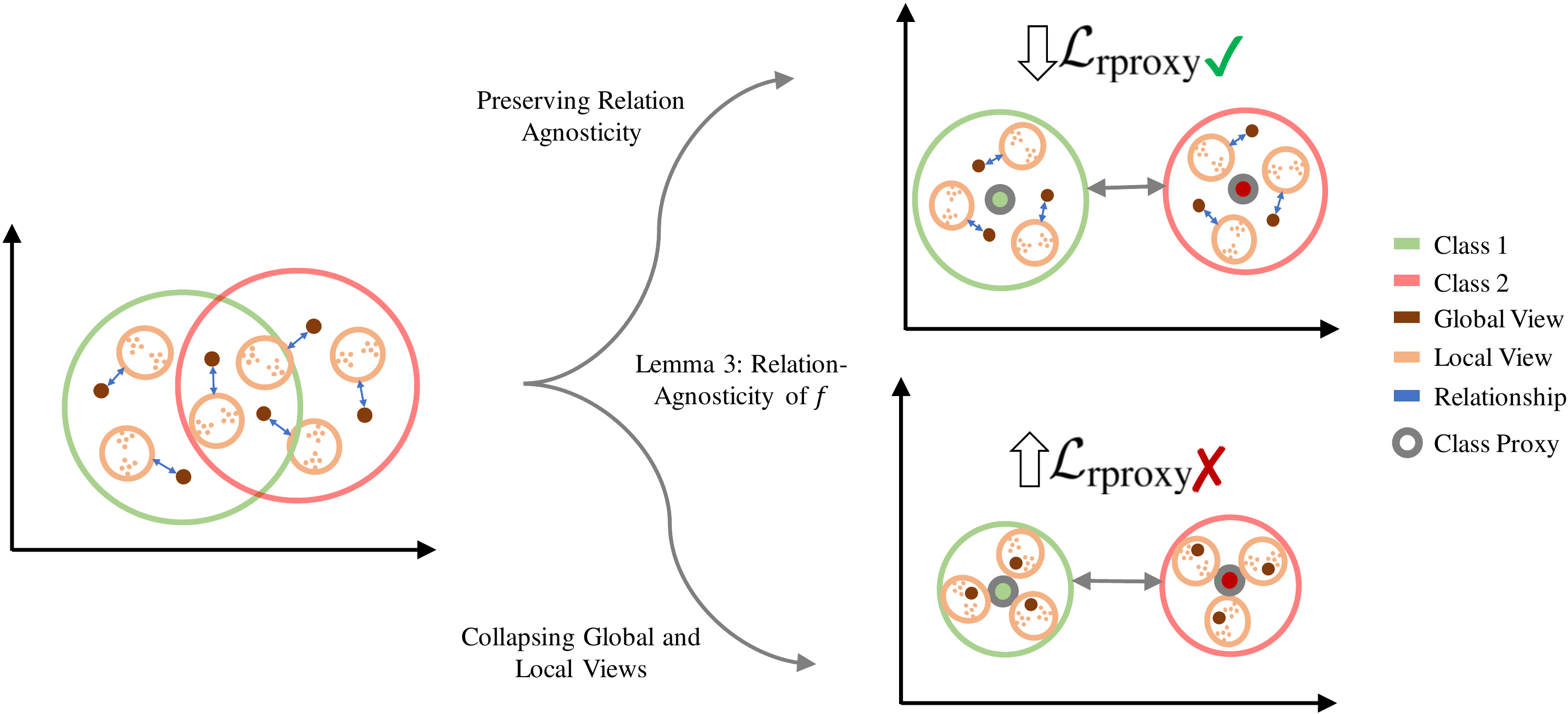}

    \caption{
    Embeddings of datapoints from two classes obtained from $f$ before training (left). As $f$ is trained with an end objective of minimizing $\mathcal{L}_\text{rporxy}$, it has two potential choices (right). However, as proven in \cref{lemma:fLearnsDisjoint}, the relation-agnostic nature of $f$ prevents the collapse of the global and local embeddings even when they share the same set of class proxies.
    }
    \label{fig:f_rel_agn}
\end{figure}

\label{subsec:proxy_relation_agnosticity)}
%
The representations $\vectorname{z}_g$ and $\vectorname{z}_{l_i}$ are computed in a relation-agnostic manner and no explicit operation is performed to reduce the domain gap between the global and the set of local views. This natural domain gap thus manifests in the representation space of $\vectorname{z}$ as its relation-agnostic nature.

%
\cref{fig:f_rel_agn} diagrammatically illustrates this idea.
Given an entangled representation space where the classes are not entirely separable (left), the encoder $f$ has two choices to map the local and global views of the corresponding datapoints to completely separable proxy neighborhoods.
It could either:
\begin{enumerate}
    \item Preserve the relation-agnosticity by maintaining the information gap (equal to the cross-view relational information) even within the proxy neighborhood (top right), or
    \item Collapse the local and global representations in the process of alignment (bottom right) by mapping them to $\epsilon$-neighborhoods of each other.
\end{enumerate}
However, since the end objective of our model is to minimize $\mathcal{L}_\text{rproxy}$, which is cross-entropic in nature, we prove via \cref{lemma:fLearnsDisjoint} that $f$ cannot collapse the local and global representations, as that would lead to an increase in the downstream cross-entropy loss. $f$ would thus choose to preserve the relational gap in the representation space while mapping them to the neighborhood of their corresponding proxy.

\begin{figure}[!t]
    \centering
    \resizebox{0.9\textwidth}{!}{
    \includegraphics{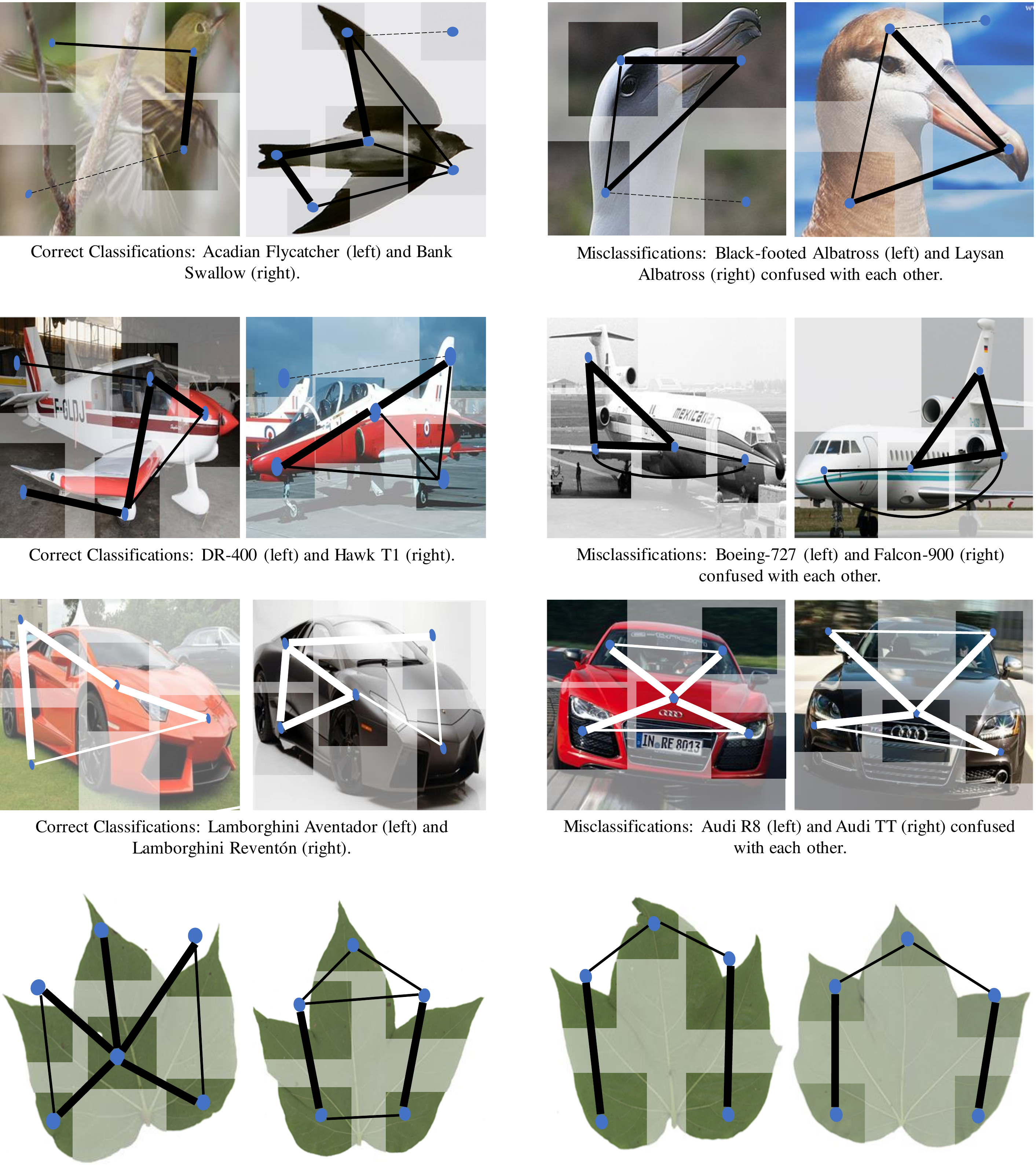}}
    \caption{Visual representations of cross-view relationships along with qualitative classification results on (in order from top) CUB, FGVC Aircraft, Stanford Cars and Cotton Cultivar datasets. The pairs on the left correspond to correct classifications made by our model, while the ones on the right are misclassifications occurring out of cross-category confusions.}
    \label{fig:qual_class_res}
\end{figure}

\subsection{Visual Representations of Cross-View Local Relationships}
\label{sec:qual_res}

\cref{fig:qual_class_res} depicts examples of graphs depicting cross-view local relationships. It can be seen that images that provide a diverse set of local views, and thus, a larger space of possible cross-view relationships are the ones that get classified correctly with full certainty. However, as the number of unique local views get limited (possibly due to occlusion or an incomplete photographing of the object), it reduces the amount of relational information that can be mined. Under situations when even the individual local-views are largely shared between classes, there remains no discriminative premise (neither local/global, nor relational) for telling their instances (with limited depiction of local views) apart. It is under such circumstances that the classifier gets confused.

\myparagraph{Example}: For instance, in the example from the CUB dataset (the top row in \cref{fig:qual_class_res}), the images of the Acadian Flycatcher and Bank Swallow depict sufficient numbers of local views like the head, tail, belly and wings, which provide a large space of potential cross-view relationships that favor classification outcome. On the other hand, the images of the Black-footed Albatross and Laysan Albatross only depict the head and the neck, thus limiting the number of computable relationships that can act as discriminators. Moreover, the head and the neck look largely similar between the two categories, thereby leading to cross-category confusion causing a subsequent misclassification. However, we believe that such a situation can be addressed by learning different distributional priors over the set of local views, which we plan to take up as future work.

\newpage

\section{Supplementary}

\subsection{Additional Experiments}

\myparagraph{Fine-grained performance boost on ImageNet subsets over SotA}
We compare our method to TransFG \cite{He2022TransFG}, the SotA FGVC method on Dogs ImageNet. We summarize our findings in \cref{tab:coarse_vs_fine_grained_sota}, which shows that our method provides state-of-the-art performance boost in the fine-grained setting over vanilla relation-agnostic encoders. $\Delta_1$ and $\Delta_2$ denote the perfomance boost achieved by an FGVC method over relation-agnostic encoders in the coarse-grained and fine-grained settings respectively. 

Relational features play a much more significant role in distinguishing fine-grained categories than coarse-grained ones. This is because most coarse-grained classes can be distinguished by local/global features alone, and would not require relational information. However, for fine-grained classes, the cross-view relationships often happen to be the only available discriminator. Thus, a learner not leveraging the same would suffer from the information gap (Section 3.2 in the main manuscript), not providing any significant boost over a relation-agnostic encoder. Our method, by capturing the cross-view relationships, is able to bridge this information gap.

\begin{table}[!ht]
    \centering
    \begin{tabular}{c|c|c|c|c|c}
        \hline
         Method & Tiny ImageNet & $\Delta_1$ & Dogs ImageNet & $\Delta_2$ & $\Delta_2 - \Delta_1$ \\
         \hline
         Relation-Agnostic Encoder & 88.75 & & 91.30 & & \\
         \hline
         TransFG \cite{He2022TransFG} & 88.85 & 0.10 & 92.30 & 1.00 & 0.90 \\
         Relational Proxy (Ours) & 88.91 & 0.16 & 92.75 & 1.45 & \textbf{1.29} \\
         \hline
    \end{tabular}
    \caption{Comparison of coarse \textit{vs.} fine-grained accuracy gains over a relation-agnostic encoder.}
    \label{tab:coarse_vs_fine_grained_sota}
\end{table}

\myparagraph{Permutation invariance of AST}
For our method to be robust to changes in pose and relative orientation of local object parts, we require the Attribute Summarization Transformer (AST) to be permutation invariant. We achieve the same by eliminating position embeddings \cite{Naseer2021IntriguingViT} from our AST.
We test the validity of our requirement by comparing the classification accuracy of Relational Proxies having ASTs with and without position embeddings \cite{Naseer2021IntriguingViT}. We summarize our findings in \cref{tab:pos_embed_AST}, which shows that making the AST permutation invariant in fact plays a role in enhancing the performance of our model.

Given the low inter-class variation of the cultivar datasets, parts of leaves from different classes could appear the same under changes in orientation, making a permutation sensitive model mistake it for a different class. For this reason, the AST without position embeddings (permutation invariant) performs significantly better (compared to other benchmarks) than the one with position embeddings (permutation sensitive).

\begin{table}[!ht]
    \centering
    \resizebox{\textwidth}{!}{
    \begin{tabular}{c|c|c|c|c|c|c}
         \hline
         \multirow{2}{*}{\textbf{Method}} & \multicolumn{4}{c|}{\textbf{Benchmark}} & \multicolumn{2}{c}{\textbf{Cultivar}} \\
         \cline{2-7}
         & \textbf{FGVC Aircraft} & \textbf{Stanford Cars} & \textbf{CUB} & \textbf{NA Birds} & \textbf{Cotton} & \textbf{Soy} \\
         \hline
         w/ Position Embeddings & 95.11 & 96.15 & 91.82 & 91.09 & 68.77 & 50.15 \\
         w/o Position Embeddings & \textbf{95.25} & \textbf{96.30} & \textbf{92.00} & \textbf{91.20} & \textbf{69.81} & \textbf{51.20} \\
         \hline
    \end{tabular}}
    \caption{Effect of position embeddings on the permutation invariance of the Attribute Summarization Transformer (AST).}
    \label{tab:pos_embed_AST}
\end{table}

\myparagraph{Evaluation with VGG-16 Backbone}
To ensure that our method has no backbone specific dependency, we perform evaluations with VGG-16 \cite{Simonyan2015VGG16} backbone and report our findings in \cref{tab:eval_vgg16}. As the numbers show, our method remains stable across backbones, significantly outperforming SotA methods that report performances with VGG-16 backbones as well.

\begin{table}[!ht]
    \centering
    \begin{tabular}{c|c|c}
    \hline
         Method & FGVC Aircraft & CUB \\
         \hline
         MaxEnt \cite{dubey2018MaxEntFGVC} & 78.08 & 77.02 \\
         MMAL \cite{Zhang2021MMAL} & 87.00 & 83.75 \\
         \textbf{Ours (Relational Proxies)} & \textbf{91.20} \textpm 0.03 & \textbf{88.13} \textpm 0.01 \\
         \hline
    \end{tabular}
    \caption{Comparison of our method with state-of-the-art using VGG-16 backbone.}
    \label{tab:eval_vgg16}
\end{table}

\subsection{Qualitative Results}

\myparagraph{Importance of Relational Information}
\cref{fig:umap_global_local_relational} shows examples of classes that cannot be separated by global or local information alone. The cross-view relational information serves as the strongest discriminator for such classes.
For example, Black-footed Albatross, Laysan Albatross and the Sooty Albatross (denoted in red, dark blue and orange respectively), share a large number of local attributes and have similar overall appearances, but have differing geometries. For this reason, as can be observed from the low-dimensional visualization of their embeddings obtained via UMAP \cite{mcinnes2018umap-software}, they are only separable based on their relational features, and not by their global or local features.
Additionally, \cref{fig:umap_across_epochs} shows that such classes becomes separable as the model learns to incorporate the relational information as part of the learning process.

\begin{figure}[!ht]
    \centering
    \resizebox{\textwidth}{!}{
    \includegraphics{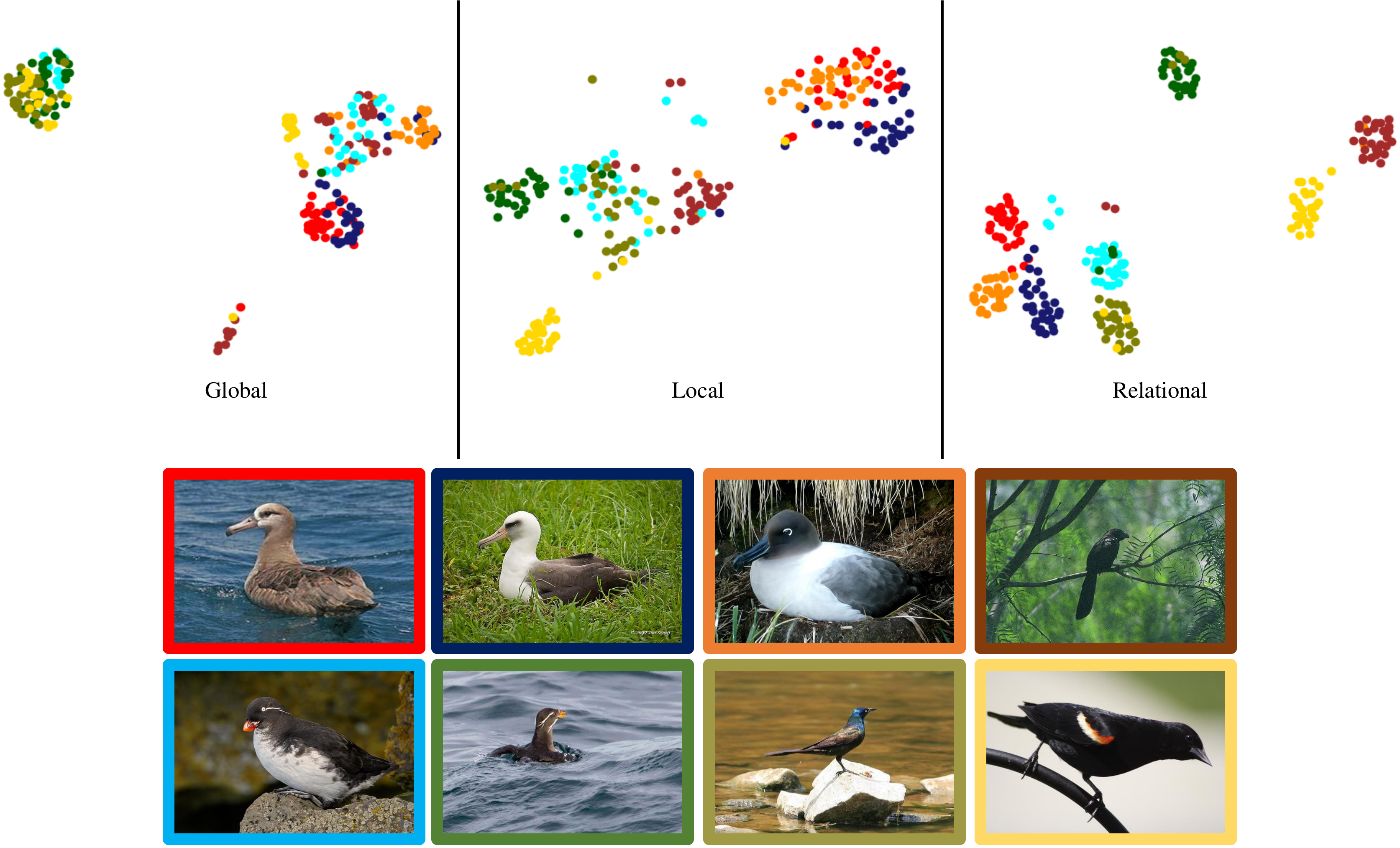}}
    \caption{Top: Low dimensional embedding visualization of categories that are difficult to separate by global or local features alone, but can be separated using relational information. Bottom: Sample images from such categories. Colors indicate category memberships.}
    \label{fig:umap_global_local_relational}
\end{figure}

\begin{figure}[!ht]
    \centering
    \resizebox{\textwidth}{!}{
    \includegraphics{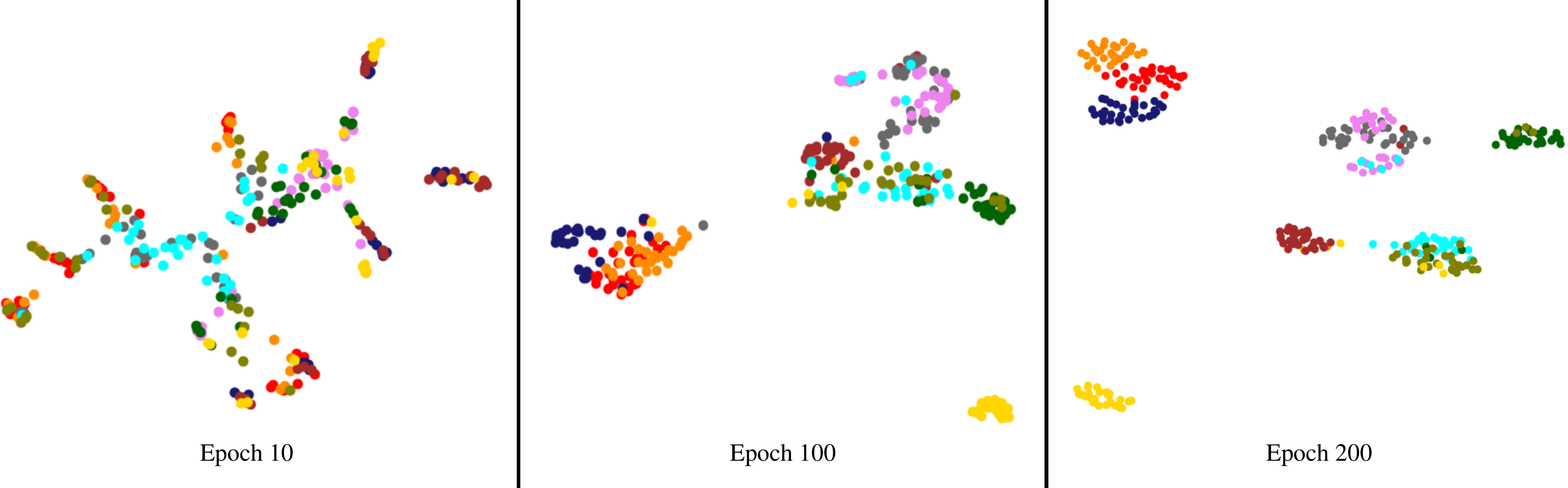}}
    \caption{Low dimensional visualization of the relational representation ($\vectorname{r}$) space evolution across epochs. Colors indicate category memberships.}
    \label{fig:umap_across_epochs}
\end{figure}

\myparagraph{Relation-Agnosticity of Relational Proxies}
\cref{fig:rel_agnosticity} shows UMAP visualizations of global and local embeddings for instances of a single class, obtained from a fully trained Relational Proxy model. It provides empirical evidence for our theoretical result in Lemma 3, \ie, $f$ will produce relation-agnostic representations if the downstream objective is cross-entropic in nature. As can be seen, despite using the same set of proxies for the global and the local views, they get mapped to disjoint locations in the representation space. The distance between the clusters of global and local views is proportional to the information gap (Proposition 1), which is separately being learned by the relational encoder $\xi$ (Proposition 2). However, some global embeddings can still be seen to overlap with the cluster of the locals. This happens with images for which the information provided by the global view becomes redundant after collectively knowing the set of local views. The global view does not provide any additional information and thus can be merged with the local views with no information loss (while maintaining the requirement of $k$-distinguishability).

\begin{figure}
    \centering
    \includegraphics[width=10cm]{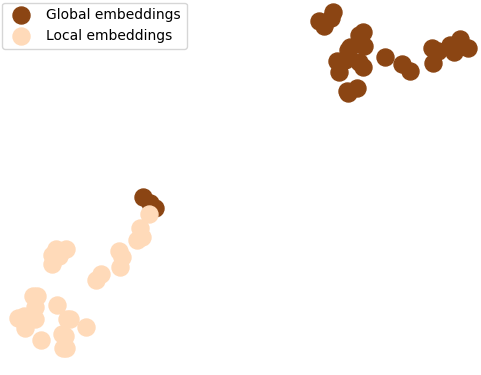}
    \caption{Low dimensional visualization of embeddings of global and local views for instances of a single class. The gap between the two clusters indicate the retention of relation-agnosticity even after the convergence of Relational Proxies, thereby supporting Lemma 3. 
    }
    \label{fig:rel_agnosticity}
\end{figure}

\begin{algorithm}
    \SetKwInOut{Input}{Input}\SetKwInOut{Output}{Output}
    \Input{A set of images $\mathbb{X}$, their corresponding labels $\mathbb{Y}$, the number of fine-grained categories $c$, the number of epochs $N$, and the learning rate $\eta$.}
    \Output{A relation agnostic-encoder $f$, a cross-view relation encoder $\xi$, and a set of $c$ relational-proxies $\mathbb{P}$ corresponding to the unique labels in $\mathbb{Y}$.
    }
    \tcc{Initialize $c$ learnable class-proxy vectors representing the labels in $\mathbb{Y}$.
    An image with label $\vectorname{y}_i$ has $\vectorname{p}_i$ as its corresponding class-proxy.
    }
    $\mathbb{P} \gets \{\vectorname{p}_1, \vectorname{p}_2, ... \; \vectorname{p}_c\}$ \\
    \For{epoch $\leftarrow 1$ \KwTo $N$}{
    $\mathcal{L}_\text{rproxy} \gets 0$ \\
        \For{$\vectorname{p} \in \mathbb{P}$}{
            $\psi^+ \gets 0; \psi^- \gets 0$ \\
            \For{$\vectorname{x} \in \mathbb{X}$}{
                $\vectorname{g} \gets c_g(\vectorname{x})$ \\
                $\mathbb{L} \gets \{\vectorname{l}_1, \vectorname{l}_2,... \: \vectorname{l}_k\} \gets c_l(\vectorname{x})$ \\
                $\vectorname{z}_g \gets f(\vectorname{g})$ \\
                $\mathbb{Z_L} \gets \{\vectorname{z}_{l_1}, \vectorname{z}_{l_2},... \: \vectorname{z}_{l_k}\} \gets \{f(\vectorname{l}) : \vectorname{l} \in \mathbb{L}\}$ \\
                $\vectorname{z}_\mathbb{L} \gets \operatorname{AST}(\mathbb{Z_L})$ \\
                $\vectorname{r} \gets \rho(\vectorname{z}_g, \vectorname{z}_\mathbb{L})$ \\
                \tcp{true proxy for $\vectorname{x}$}
                \uIf{$\vectorname{p} == \vectorname{p}^+$} {
                    $\psi^+ \gets \psi^+ + e^{-\alpha(s(\vectorname{z}_g, \vectorname{p}) - \delta)} + 
                    e^{-\alpha(s(\vectorname{z}_\mathbb{L}, \vectorname{p}) - \delta)} +
                    e^{-\alpha(s(\vectorname{z}_g, \vectorname{p}) - \delta)}$
                }
                \tcp{negative proxies for $\vectorname{x}$}
                \uElse{
                    $\psi^- \gets \psi^- + e^{\alpha(s(\vectorname{z}_g, \vectorname{p}) + \delta)} + e^{\alpha(s(\vectorname{z}_\mathbb{L}, \vectorname{p}) + \delta)} +
                    e^{\alpha(s(\vectorname{r}, \vectorname{p}) + \delta)}$
                }
            }
            $\psi^+ \gets 1 + \psi^+$ \\
            $\psi^- \gets 1 + \psi^-$ \\
            $\mathcal{L}_\text{rproxy} \gets \mathcal{L}_\text{rproxy} - \frac{1}{c} \log \left( \frac{1}{\psi^+ \cdot \psi^-} \right)$
        }
        $f \gets f - \eta \nabla_f \mathcal{L}_\text{rproxy}$ \\
        $\mathrm{AST} \gets \mathrm{AST} - \eta \nabla_\mathrm{AST} \mathcal{L}_\text{rproxy}$ \\
        $\rho \gets \rho - \eta \nabla_\rho \mathcal{L}_\text{rproxy}$ \\
        \For{$\vectorname{p} \in \mathbb{P}$}{
            $\vectorname{p} \gets \vectorname{p} - \eta \nabla_\vectorname{p} \mathcal{L}_\text{rproxy}$ \\
        }
    }
    \caption{\textsc{Relational-Proxies}: End-to-end training procedure for Relational Proxies.}
    \label{alg:rproxy}
\end{algorithm}

\subsection{Additional notes on Relational Proxies}

\myparagraph{Pseudocode}
\cref{alg:rproxy} provides the pseudocode for training our Relational Proxies model. We start by initializing a set of $c$ learnable class-proxies $\{\vectorname{p}_1, \vectorname{p}_2, ... \; \vectorname{p}_c\}$. For each image $\vectorname{x}$, we obtain its global $\vectorname{z}_g$ and set $\mathbb{Z_L}$ of local representations by propagating their corresponding views (obtained via cropping functions $c_g$ and $c_l$) through a relation-agnostic encoder $f$ (lines 10-11). We then realize the cross-view relational encoder $\xi$ as a combination of the Attribute Summarization Transformer (AST) and the MLP $\rho$. The AST returns a summary of the local views $\vectorname{z}_\mathbb{L}$ (line 12). Using $\vectorname{z}_g$ and $\vectorname{z}_\mathbb{L}$, $\rho$ computes the cross-view relation embedding $\vectorname{r}$ (line 13). Thereafter, all three representation of $\vectorname{x}$, \ie, $\vectorname{z}_g$, $\vectorname{z}_l$ and $\vectorname{r}$ are used to condition the learning of the class proxies. The representations are incentivised to remain close to the proxy corresponding to their true class, while being distant from proxies of other classes (lines 15-19). How far the representation space deviates from this structural requirement is captured by computing the cross-entropic loss $\mathcal{L}_\text{rproxy}$. Minimizing $\mathcal{L}_\text{rproxy}$ thus has the effect of enforcing the representations to form a metric space (lines 23-27). Upon convergence, $\{\vectorname{p}_1, \vectorname{p}_2, ... \; \vectorname{p}_c\}$ serve as the set of Relational Proxies.

\begin{figure}
    \centering
    \begin{subfigure}{0.4\textwidth}
        \centering
        \includegraphics[height=5cm,width=\linewidth]{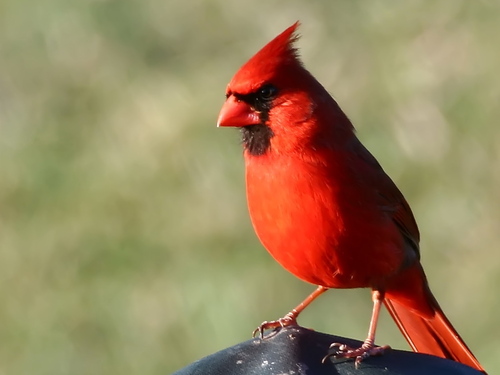}
    \end{subfigure}%
    \hspace{10pt}
    \begin{subfigure}{0.4\textwidth}
        \centering
        \includegraphics[height=5cm, width=\linewidth]{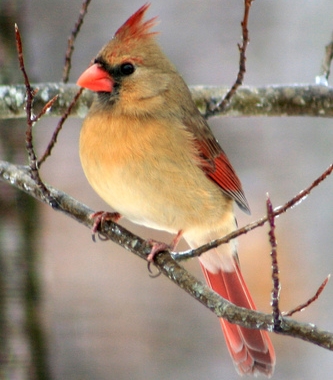}
    \end{subfigure}
    \caption{Male (left) and female (right) cardinals.}
    \label{fig:intraclass_cardinals}
\end{figure}

\myparagraph{Cross-view relationships for intra-class variations}
\cref{fig:intraclass_cardinals} depicts the large variation in non-relational features like color and texture between male and female cardinals. Even though they belong to the same fine-grained category of cardinal birds, a model not accounting for the relationships between the individual local parts and the global view of the object, and hence not capturing the fine-grained geometric relationships, would not be able to map such significantly varying instances to the same neighborhood of the representation space. In such scenarios, the relational information becomes the only component that can be used to learn compact representations of categories with such large intra-class variations.

\end{document}